%% file: ms.tex
\documentclass{article}

\usepackage{microtype}
\usepackage{graphicx}
\usepackage{booktabs} 

\usepackage{hyperref}

\usepackage{algorithm}
\usepackage[noend]{algorithmic}

\usepackage[accepted]{icml2024}

\usepackage{amsmath}
\usepackage{amssymb}
\usepackage{mathtools}
\usepackage{amsthm}

\usepackage[capitalize,noabbrev]{cleveref}

\theoremstyle{plain}

\newtheorem{corollary}{Corollary}
\newtheorem{assumption}{Assumption}
\newtheorem{definition}{Definition}
\newtheorem{lemma}{Lemma}
\newtheorem{example}{Example}
\newtheorem{remark}{Remark}

\usepackage[textsize=tiny]{todonotes}
\newcommand{\purechildren}{\text{PCh}_{\graph}}
\newcommand{\node}[1]{{#1}}
\newcommand{\set}[1]{\mathbf{#1}}
\newcommand{\setset}[1]{\mathcal{#1}}
\newcommand{\graph}{\mathcal{G}}
\newcommand{\parents}{\text{Pa}_{\graph}}

\newcommand{\descendant}{\text{De}_{\graph}}
\newcommand{\children}{\text{Ch}_{\graph}}

\DeclareMathOperator*{\argmin}{arg\,min}
\DeclareMathOperator{\tr}{tr}
\DeclareMathOperator{\supp}{supp}
\DeclareMathOperator{\diag}{diag}
\DeclareMathOperator{\score}{score}

\newcommand{\subjectto}{\mathrm{subject\ to}}
\crefname{assumption}{Assumption}{Assumptions}
\newcommand{\EmpCov}{S}
\usepackage{multirow}
\makeatletter

\newcommand{\algorithmicforeach}{\textbf{foreach}}
\newcommand{\FOREACH}[2][default]{%
  \ALC@it\algorithmicforeach\ #2\ \algorithmicdo%
    \ALC@com{#1}\begin{ALC@for}}
\makeatother
\usepackage{subfig}
\usepackage{thmtools,thm-restate}
\usepackage{enumitem}

\makeatletter
\AddToHook{cmd/appendix/before}{\def\cref@section@alias{appendix}\def\cref@subsection@alias{appendix}}
\makeatother

\icmltitlerunning{Score-Based Causal Discovery of Latent Variable Causal Models}

\begin{document}

\twocolumn[
\icmltitle{Score-Based Causal Discovery of Latent Variable Causal Models}



\icmlsetsymbol{equal}{*}

\begin{icmlauthorlist}
\icmlauthor{Ignavier Ng}{equal,cmu}
\icmlauthor{Xinshuai Dong}{equal,cmu}
\icmlauthor{Haoyue Dai}{cmu}
\icmlauthor{Biwei Huang}{ucsd}
\icmlauthor{Peter Spirtes}{cmu}
\icmlauthor{Kun Zhang}{cmu,mbzuai}
\end{icmlauthorlist}

\icmlaffiliation{cmu}{Carnegie Mellon University}
\icmlaffiliation{ucsd}{University of California, San Diego}
\icmlaffiliation{mbzuai}{Mohamed bin Zayed University of Artificial Intelligence}


\icmlkeywords{Machine Learning, ICML}

\vskip 0.3in
]



\printAffiliationsAndNotice{\icmlEqualContribution} 

\begin{abstract}
\input{sections/0abstract}
\end{abstract}

\input{sections/1introduction}
\input{sections/2preliminaries}
\input{sections/3methodology}
\input{sections/4experiments}
\input{sections/5conclusion}
\input{sections/acknowledgements}
\input{sections/impact_statement}


\bibliography{bibliography}
\bibliographystyle{ms}

\newpage
\appendix
\onecolumn
\begin{center}
{\LARGE \bf 
Supplementary Material}
\end{center}
\input{sections/6appendices}

\end{document}

%% file: sections/0abstract.tex
Identifying latent variables and the causal structure involving them is essential across various scientific fields. While many existing works fall under the category of constraint-based methods (with e.g. conditional independence or rank deficiency tests), they may face empirical challenges such as testing-order dependency, error propagation, and choosing an appropriate significance level. These issues can potentially be mitigated by properly designed score-based methods, such as Greedy Equivalence Search (GES)~\citep{chickering2002optimal} in the specific setting without latent variables. Yet, formulating score-based methods with latent variables is highly challenging. In this work, we develop score-based methods that are capable of identifying causal structures containing causally-related latent variables with identifiability guarantees. Specifically, we show that a properly formulated scoring function can achieve score equivalence and consistency for structure learning of latent variable causal models. We further provide a characterization of the degrees of freedom for the marginal over the observed variables under multiple structural assumptions considered in the literature, and accordingly develop both exact and continuous score-based methods. This offers a unified view of several existing constraint-based methods with different structural assumptions. Experimental results validate the effectiveness of the proposed methods.

%% file: sections/1introduction.tex
\section{Introduction}\label{sec:intro}
At the core of understanding complex systems lies causal discovery, the identification of causal relations from observational data~\citep{spirtes2001causation,pearl2009causality}. One common assumption in causal discovery algorithms is the absence of latent confounders, known as \textit{causal sufficiency}, positing that the observed correlations stem either from true causation or can be sufficiently explained by other observed variables. Yet, real-world scenarios often defy this assumption. For instance, in psychological studies, the measured questionnaires are indirect proxies of latent mental factors. In unstructured data like images and texts, the observed pixels and words are confounded by latent semantic variables. Directly applying causal discovery methods without considering these latent variables can lead to false discoveries, as latent variables may introduce spurious correlations among observed ones that cannot be attributed to true causation.

Notable efforts have thus been made to identify the true causal relations in the presence of latent variables. Earliest attempts include Fast Causal Inference (FCI)~\citep{spirtes2001causation, zhang2008completeness} and its variants~\citep{colombo2012learning, spirtes2013causal, claassen2013learning, akbari2021recursive} that exploit conditional independence information. There are two main limitations of FCI: First, the results, presented by partial ancestral graphs (PAG)~\citep{richardson1996models}, tend to be overgeneralized -- e.g., whenever two observed variables may be confounded, it indicates so. Second, it focuses solely on causal relations among observed variables and does not provide information about those among latent variables. In short, FCI does not require specific assumptions about the latent structure, at the cost of having a less informative output. In contrast, one may often be interested in identifying the causal relations among latent variables (e.g., the latent mental and semantic variables in the above examples).\looseness=-1

Hence, another line of work has been developed to discover the causal structure also among latent variables. For the identifiability conditions, these methods typically introduce additional parametric assumptions to mitigate the large model indeterminacies faced by FCI. This includes rank or tetrad condition-based methods with linearity assumption~\citep{silva2003learning,silva2006learning,silva2005generalized,choi2011learning,kummerfeld2016causal,huang2022latent,dong2023versatile}, high-order moments-based methods~\citep{shimizu2009estimation,zhang2018causal,cai2019triad,salehkaleybar2020learning,xie2020generalized,adams2021identification,dai2022independence,chen2022identification,amendola2023third,wang2023causal}, matrix decomposition-based methods~\citep{anandkumar2013learning}, copula model-based methods~\citep{cui2018learning}, mixture oracles-based methods~\citep{kivva2021learning}, and multiple domains-based methods~\citep{zeng2021causal,sturma2023unpaired}. For the algorithmic procedures, these methods generally fall under the category of constraint-based methods, by matching the statistical properties to possible structural patterns and constructing the whole causal structure iteratively. A typical constraint-based method in the causally sufficient case is PC~\citep{spirtes1991pc}. Despite the asymptotic consistency, the empirical reliability of constraint-based methods may be limited due to \textit{testing-order dependency} and \textit{error propagation}~\citep{spirtes2010introduction,colombo2012learning}, especially when the number of variables is large.

\vspace{-0.1em}
To address such empirical issues of constraint-based methods, score-based causal discovery methods have been introduced, and may be more favored in practical applications~\citep{nandy2018high,ramsey2017million}. Unlike the iterative construction of a single causal graph by constraint-based methods, score-based methods assign a score to each potential graph reflecting how well it explains the observed data and generally search over the graph space to find the optimal graph. In the causally sufficient case, one typical score-based method is the Greedy Equivalence Search (GES)~\citep{chickering2002optimal}. There also exists several score-based methods that can handle latent variables~\citep{shpitser2012parameter,triantafillou2016score,nowzohour2017distributional,bhattacharya2020differentiable,shahin2020automatic,bernstein2020ordering,bellot2021deconfounded,claassen2022greedy}. Similar to FCI, most of them do not discover the causal relations among latent variables, except the method by \citet{zhang2004hierarchical} without identifiability guarantee. When latent variables are introduced and relations among them are further allowed in the causal structures, challenges arise in characterizing the degrees of freedom~\citep{geiger1996asymptotic,geiger2001stratified}, formulating a scoring function, and structuring the search procedure. We tackle these challenges in this paper, and to the best of our knowledge, this is the first score-based method that identifies causal structures containing causally-related latent variables with identifiability guarantees.

\vspace{-0.1em}
\textbf{Contributions.} \ \ 
We develop score-based methods, called SALAD (which stands for Score-bAsed Latent cAusal Discovery), for causal discovery of latent variable causal models, providing a unified view for several existing constraint-based methods \citep{silva2003learning,huang2022latent}. Our contributions can be summarized as follows:
\begin{itemize}
    \vspace{-0.4em}
    \item We develop a formulation of scoring function for identifying linear latent variable causal models. We show  (1) that it is score equivalent and (2) that minimizing it yields a structure that is algebraic equivalent to the true structure. The latter implies that both structures have the same equality constraints (on the marginal over the observed variables), including conditional independence and rank deficiency constraints.
    \item We provide a characterization of the degrees of freedom for the marginal over the observed variables under the structural assumptions considered by \citet{silva2003learning,huang2022latent}.
    \item We develop exact score-based methods for estimating the causal structure, and show that they can asymptotically identify the true equivalence class of the whole structure. We also provide continuous score-based methods in some of the settings to improve the computational efficiency.
    \item We demonstrate that the proposed score-based methods achieve improved performance over existing constraint-based methods for estimating the structures of latent variable causal models, which further validate the effectiveness of score-based methods.
\end{itemize}
\textbf{Notations.} \ \ 
For a matrix $M$, we define its support set as $\supp(M)\coloneqq\{(i,j):M_{i,j}\neq 0\}$. We denote by $M_{\set{S}, :}$ the rows in $M$ indexed by set $\set{S}$, and similarly by $M_{:, \set{S}}$ for the columns. For a directed acyclic graph (DAG) $\mathcal{G}$, we denote by $|\mathcal{G}|$ the number of edges in $\mathcal{G}$. Also, let $\diag(\mathbb{R}_{> 0}^m)$ be the set of $m\times m$ diagonal matrices with positive diagonal entries, $\mathbb{U}^{m}$ be the set of $m\times m$ strictly upper triangular matrices, and $\mathbb{G}^{m}$ be the set of graphs with $m$ measured variables that follow \cref{eq:linear_sem}. For set $\set{S}$, we define its $k$-partition as a partition of its elements into $k$ non-empty subsets.\looseness=-1

%% file: sections/2preliminaries.tex
\section{Latent Variable Causal Models}\label{sec:preliminaries}
In this section, we discuss several aspects of latent variable causal models. Specifically, we describe the preliminaries and problem setting in \cref{sec:problem_setting}, as well as the formulation of likelihood function in \cref{sec:likelihood}. We provide a discussion of latent variable causal models in \cref{app:latent_variable_models}.\looseness=-1

\subsection{Preliminaries and Problem Setting}\label{sec:problem_setting}
We consider a linear latent variable causal model with DAG $\mathcal{G}$, in which the measured variables $X=(X_1,\dots,X_m)$ and latent (unmeasured) variables $L=(L_1,\dots,L_n)$ follow the data generating procedure:\looseness=-1
\begin{equation}\label{eq:linear_sem}
L=CL+E_L \quad\text{and}\quad X=BL+E_X,
\end{equation}
where $E_X$ and $E_L$ are jointly independent noise terms that follow Gaussian distributions. The structure of DAG $\mathcal{G}$ is defined by the support of matrices $B$ and $C$, i.e., $L_j\rightarrow L_i$ is an edge in $\mathcal{G}$ if $C_{i,j}\neq 0$ and $L_j\rightarrow X_i$ is an edge in $\mathcal{G}$ if $B_{i,j}\neq 0$. For DAG $\mathcal{G}$, we denote by $B_\mathcal{G}\in\{0,1\}^{m\times n}$ the binary adjacency matrix that represent the edges from latent variables $L$ to measured variables $X$, and by $C_\mathcal{G}\in\{0,1\}^{n\times n}$ the binary adjacency matrix that represent the edges among latent variables $L$. Without loss of generality, we assume that matrices $C$ and  $C_\mathcal{G}$ are strictly upper triangular. 

Let $\Sigma_X$ and $\Sigma_L$ be the population covariance matrices of measured variables $X$ and latent variables $L$ respectively. Also let  $\Omega_X$ and $\Omega_L$ be the (diagonal) covariance matrices of noise terms $E_X$ and $E_L$ respectively. $\Sigma_L$ can be written as\looseness=-1
\[\Sigma_{L} =(I-C)^{-1}\Omega_L(I-C)^{-\top}.
\]
By $\Sigma_{X} =B \Sigma_{L} B^\top+\Omega_X$, we then have
\begin{equation}\label{eq:measured_covariance_matrix}
\Sigma_{X}=B(I-C)^{-1}\Omega_L(I-C)^{-\top} B^{\top}+\Omega_X.
\end{equation}
We say that a DAG $\mathcal{G}$ can generate a covariance matrix if there exists a parameterization of $\mathcal{G}$ such that \cref{eq:measured_covariance_matrix} holds. Furthermore, since the labeling of latent variables in general cannot be identified, we say that two DAGs are Markov equivalent if they are Markov equivalent after relabeling of latent variables. Given $T$ i.i.d. samples of variables $X$, denoted as $\mathbf{D}$ with empirical covariance matrix $\EmpCov$, the goal is to estimate the structure $\mathcal{G}$ up to certain type of model equivalence (specified in \cref{sec:1_factor_model,sec:hierarchical_structures}). 

\subsection{Formulation of Likelihood Function}\label{sec:likelihood}
We first discuss about the indeterminacy of parameter $\Omega_L$ via the following lemma, since it affects how we formulate the likelihood. The proof is given in \cref{app:proof_indeterminacy_omegaL}.
\begin{restatable}[Indeterminacy of $\Omega_L$]{lemma}{IndeterminacyOfOmegaL}\label{lemma:indeterminacy_Omega_L}
For any parameters $B, C, \Omega_X, \Omega_L$, and $\Sigma_X$ that follow \cref{eq:measured_covariance_matrix}, there exist parameters $\tilde{B}$ and $\tilde{C}$ with $\supp(B)=\supp(\tilde{B})$ and $\supp(C)=\supp(\tilde{C})$ such that
\[
\Sigma_{X}=\tilde{B}(I-\tilde{C})^{-1}(I-\tilde{C})^{-\top} \tilde{B}^{\top}+\Omega_X.
\]
\end{restatable}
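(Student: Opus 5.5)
The idea is to rescale each latent variable so that its noise variance becomes one. Because $\Omega_L$ is diagonal with positive entries, the rescaling can be undone by adjusting the edge weights, and no support pattern changes.

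Let $D\coloneqq\Omega_L^{1/2}$, a diagonal matrix with positive diagonal entries, so that $D\in\diag(\mathbb{R}_{>0}^n)$ and $\Omega_L=DD^\top = D^2$. Define
\[
\tilde{C}\coloneqq D^{-1} C D,\qquad \tilde{B}\coloneqq B D .
\]
Conjugating by a positive diagonal matrix multiplies entry $(i,j)$ by $D_{j,j}/D_{i,i}\neq 0$, so $\supp(\tilde{C})=\supp(C)$. For the same reason $\tilde{C}$ stays strictly upper triangular. Similarly, $\tilde{B}_{i,j}=B_{i,j}D_{j,j}$, so $\supp(\tilde{B})=\supp(B)$.

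Next we verify the covariance identity. We have
\[
I-\tilde{C}=D^{-1}(I-C)D,
\]
so $(I-\tilde{C})^{-1}=D^{-1}(I-C)^{-1}D$. This inverse exists because $I-C$ is unit upper triangular. It follows that
\[
\tilde{B}(I-\tilde{C})^{-1}=BD\,D^{-1}(I-C)^{-1}D=B(I-C)^{-1}D.
\]
Hence
\[
\tilde{B}(I-\tilde{C})^{-1}(I-\tilde{C})^{-\top}\tilde{B}^\top = B(I-C)^{-1}DD^\top(I-C)^{-\top}B^\top = B(I-C)^{-1}\Omega_L(I-C)^{-\top}B^\top .
\]
Adding $\Omega_X$ to both sides and using \cref{eq:measured_covariance_matrix} gives the claimed identity for $\Sigma_X$.

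No step here is a real obstacle; the only point needing care is the algebra. The correct conjugation is $D^{-1}CD$ rather than $DCD^{-1}$, since that is the choice under which the factor $D$ moves to the right of $(I-C)^{-1}$ and combines into $D D^\top=\Omega_L$. I would check this one line carefully, and also confirm that the scalings are nonzero so that the supports are preserved.
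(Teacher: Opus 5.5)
Your proposal is correct and takes the same approach as the paper's proof: it also sets $\tilde{B}=B\Omega_L^{1/2}$ and $\tilde{C}=\Omega_L^{-1/2}C\Omega_L^{1/2}$, then verifies the identity by inserting $\Omega_L^{1/2}\Omega_L^{-1/2}$ factors. Your explicit checks that the entrywise scalings are nonzero, so supports and strict upper-triangularity are preserved, spell out what the paper only asserts.
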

In other words, any covariance matrix $\Sigma_X$ resulting from DAG $\mathcal{G}$ and arbitrary $\Omega_L$ can be generated by alternative parameters from the same DAG with $\tilde{\Omega}_L=I$. This implies that the parameter $\Omega_L$ cannot be estimated from $\Sigma_X$ without additional information and further assumption. Furthermore, since the goal is to estimate the structure $\mathcal{G}$, this suggests that one may assume $\Omega_L$ to be an identity matrix during estimation without loss of generality. It is worth noting that such indeterminacy of $\Omega_L$ has been discussed in various existing works \citep{squires2023linear}, which we make precise here, as it is crucial for formulating the likelihood.

We now provide the likelihood formulation for the linear latent variable causal model in \cref{eq:linear_sem}. As suggested by \cref{lemma:indeterminacy_Omega_L}, we set $\Omega_L=I$ in the likelihood. Given the empirical covariance matrix $S$ obtained from $T$ samples, the negative log-likelihood is given up to additive constant by
\begin{flalign*}
& \mathcal{L}\left(B, C, \Omega_X; \mathbf{D}\right)\\
& \quad = \frac{T}{2}\tr\left(\EmpCov\left(B(I-C)^{-1}(I-C)^{-\top} B^{\top}+\Omega_X\right)^{-1}\right) \\
& \quad \quad \quad +\frac{T}{2} \log \operatorname{det}\left(B(I-C)^{-1}(I-C)^{-\top} B^{\top}+\Omega_X\right).
\end{flalign*}

%% file: sections/3methodology.tex
\section{Score-Based Identification of Latent Variable Causal Models}\label{sec:score_based_latent_causal_discovery}
In this section, we discuss how to learn linear latent variable causal models with scoring function. First, we introduce the notion of distribution sets and equality constraints in \cref{sec:distribution_sets}. We formulate the scoring function in \cref{sec:score}, and show how it enables structure identification up to algebraic equivalence in \cref{sec:score_identification}. We then discuss about the BIC score in \cref{sec:bic_score}.

\subsection{Distribution Sets and Equality Constraints}\label{sec:distribution_sets}
We describe the notion of distribution set that is a key ingredient of our score-based search procedure. It refers to the set of marginal distributions generated by a specific structure.
\begin{definition}[Distribution set]
The distribution set of DAG~$\mathcal{G}$, denoted by $\mathcal{M}(\mathcal{G})$, is defined as
\begin{flalign*}
\mathcal{M}(\mathcal{G})\coloneqq\{& B(I-C)^{-1}\Omega_L(I-C)^{-\top} B^{\top}+\Omega_X:\\
&\, \supp(B)\subseteq\supp(B_\mathcal{G}),\supp(C)\subseteq\supp(C_\mathcal{G}),\\
&\, \Omega_X\in\diag(\mathbb{R}_{> 0}^m),\Omega_L\in\diag(\mathbb{R}_{> 0}^n)\}.
\end{flalign*}
\end{definition}
Specifically, $\mathcal{M}(\mathcal{G})$ is the set of covariances matrices $\Sigma_X$ that can be generated by DAG $\mathcal{G}$ by varying the parameters in matrices $B$, $C$, $\Omega_X$, and $\Omega_L$. Moreover, since $C_\mathcal{G}$ is acyclic by assumption, we have $(I-C)^{-1}=\sum_{k=0}^{n-1} C^k$. It follows that \cref{eq:measured_covariance_matrix} is a polynomial map, and thus the distribution set $\mathcal{M}(\mathcal{G})$ is, by Tarski–Seidenberg theorem~\citep{benedetti1990real}, a semialgebraic set. Note that a set is said to be \emph{semialgebraic} if it can be equivalently represented by a finite number of polynomial equalities and inequalities~\citep{benedetti1990real}.

Structure $\mathcal{G}$ imposes various types of equality (i.e., algebraic) constraints on the covariance matrices, such as conditional independence (i.e., vanishing partial correlation) constraints \citep{spirtes2001causation}, rank deficiency (i.e., vanishing determinant) constraints \citep{spirtes2001causation,sullivant2010trek}, and possibly Verma constraints~\citep{verma1991equivalence}. We refer the readers to \citet{drton2018algebraic} for an overview. Let $H(\mathcal{G})$ be the set of equality constraints imposed by structure $\mathcal{G}$ on the distribution set $\mathcal{M}(\mathcal{G})$, and $\mathbb{H}^m\coloneqq\bigcup_{\mathcal{G}\in\mathbb{G}^m}H(\mathcal{G})$ be the set of possible equality constraints imposed by any structure $\mathcal{G}$ (with $m$ measured variables). Two structures $\mathcal{G}_1$ and $\mathcal{G}_2$ are said to be \emph{algebraic equivalent} if they lead to the same equality constraints, i.e., $H(\mathcal{G}_1)=H(\mathcal{G}_2)$~\citep{ommen2017algebraic}.\footnote{In the terminology of algebraic geometry, $\mathcal{M}(\mathcal{G}_1)$ and $\mathcal{M}(\mathcal{G}_2)$ share the same vanishing ideal or Zariski closure \citep{cox2008ideals}.}

Furthermore, let $\dim(\mathcal{G})$ denote the model dimension or degrees of freedom of DAG $\mathcal{G}$ for the marginal over the observed variables, which can be viewed as the number of free parameters for the distribution set $\mathcal{M}(\mathcal{G})$. In general, the degrees of freedom are not necessarily equal to the number of parameters (i.e., sum of number of edges and measured variables) in the presence of latent variables \citep{geiger1996asymptotic,geiger2001stratified}. In \cref{sec:1_factor_model,sec:hierarchical_structures}, we further characterize the degrees of freedom under specific structural assumptions.

\input{sections/3score}

\subsection{Identifying Structures up to Algebraic Equivalence}\label{sec:score_identification}
Having formulated the scoring function in \cref{sec:score}, the question remains as how to leverage it to identify the underlying structure $\mathcal{G}$. To do so, a key ingredient is to establish the correspondence between the covariance matrix and the structure $\mathcal{G}$. As discussed in \cref{sec:distribution_sets}, the structure $\mathcal{G}$ imposes different types of constraints on the entries of covariance matrices, including equality and inequality constraints. Here, we adopt the following assumption which requires that the equality constraints are imposed by the structure $\mathcal{G}$.
\begin{assumption}[Generalized faithfulness \citep{ghassami2020characterizing}]\label{assumption:generalized_faithfulness}
A distribution $\Sigma_X$ is said to be generalized faithful to DAG $\mathcal{G}$ if the entries of $\Sigma_X$ satisfy an equality constraint $\kappa\in\mathbb{H}^m$ only if $\kappa\in H(\mathcal{G})$.
\end{assumption}
It is worth noting that different types of faithfulness assumptions have been adopted in causal discovery~\citep{spirtes2001causation,ghassami2020characterizing,huang2022latent} to relate the constraints of the distributions (e.g., conditional independence and rank deficiency constraints) to the underlying structure. This is often motivated by the fact that the set of parameters violating these assumptions has Lebesgue measure zero (see, e.g., \citet[Proposition~8]{ghassami2020characterizing}).

We then present the following result that describes the notion of equivalence achieved by minimizing the scoring function. The proof is provided in \cref{app:thm_equivalence_equality_constraints}, which is partly inspired by the proof of \citet[Theorem~3]{ghassami2020characterizing}.
\vspace{-0.7em}
\begin{restatable}[Algebraic equivalence]{theorem}{TheoremEquivalenceEqualityConstraints}\label{thm:equivalence_equality_constraints}
Suppose the true DAG $\mathcal{G}^*$ and the distribution $\Sigma_X$ satisfy the generalized faithfulness assumption. Let $\hat{\mathcal{G}}\in \argmin_{\mathcal{G}\in\mathbb{G}^m}\score_{\textrm{dim}}(\mathcal{G},\mathbf{D})$. Then, $\hat{\mathcal{G}}$ and $\mathcal{G}^*$ are algebraic equivalent, i.e., $H(\hat{\mathcal{G}})=H(\mathcal{G}^*)$, in the large sample limit.
\end{restatable}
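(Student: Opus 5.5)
The argument characterizes the large-sample minimizers of $\score_{\textrm{dim}}$ through two properties: \emph{fit} and \emph{dimension}. The aim is to show that $\hat{\mathcal{G}}$ generates (a limit of) $\Sigma_X$ and has model dimension no larger than $\dim(\mathcal{G}^*)$. Generalized faithfulness combined with a dimension comparison of Zariski closures will then force $H(\hat{\mathcal{G}})=H(\mathcal{G}^*)$. I assume, as in \cref{sec:score}, that $\score_{\textrm{dim}}$ is the minimized negative log-likelihood (with $\Omega_L=I$, justified by \cref{lemma:indeterminacy_Omega_L}) plus a penalty $\lambda_T\dim(\mathcal{G})$ satisfying $\lambda_T\to\infty$ and $\lambda_T/T\to 0$ (e.g.\ $\tfrac{\log T}{2}$). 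If the score is instead defined as a lexicographic "fit first, then dimension" criterion, the first step below is immediate.

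\textbf{Step 1 (asymptotics of the score).} Since $S\to\Sigma_X$ almost surely, the per-sample minimized negative log-likelihood converges to the minimum over the closure $\overline{\mathcal{M}(\mathcal{G})}$ of the Gaussian KL-type discrepancy $\tr(\Sigma_X\Sigma^{-1})+\log\det\Sigma$.
\begin{itemize}
\item If $\Sigma_X\notin\overline{\mathcal{M}(\mathcal{G})}$, this limit strictly exceeds its value at $\Sigma=\Sigma_X$. The score of $\mathcal{G}$ then exceeds that of $\mathcal{G}^*$ by a term of order $T$, which dominates the $o(T)$ penalty difference.
\item If $\Sigma_X\in\overline{\mathcal{M}(\mathcal{G})}$, the likelihood-ratio difference relative to $\mathcal{G}^*$ is $O_p(1)$ (or at least $o_p(\lambda_T)$). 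The penalty $\lambda_T\dim(\mathcal{G})$ therefore decides the comparison.
\end{itemize}
Hence, in the limit, every minimizer satisfies $\Sigma_X\in\overline{\mathcal{M}(\hat{\mathcal{G}})}$ and $\dim(\hat{\mathcal{G}})\le\dim(\mathcal{G}^*)$.

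\textbf{Step 2 (one inclusion via faithfulness).} Every $\kappa\in H(\hat{\mathcal{G}})$ is a polynomial vanishing on $\mathcal{M}(\hat{\mathcal{G}})$. By continuity it also vanishes on the closure, so $\kappa(\Sigma_X)=0$. By \cref{assumption:generalized_faithfulness}, $\kappa\in H(\mathcal{G}^*)$, so $H(\hat{\mathcal{G}})\subseteq H(\mathcal{G}^*)$. Equivalently, the Zariski closures satisfy $V^*:=\overline{\mathcal{M}(\mathcal{G}^*)}^{Zar}\subseteq \hat V:=\overline{\mathcal{M}(\hat{\mathcal{G}})}^{Zar}$.

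\textbf{Step 3 (reverse inclusion via dimension).} Each $\mathcal{M}(\mathcal{G})$ is the image of an irreducible parameter space (an open subset of a real affine space) under the polynomial map \cref{eq:measured_covariance_matrix}. Its Zariski closure is therefore an irreducible variety whose dimension equals $\dim(\mathcal{G})$, the rank of the Jacobian at generic parameters.
\begin{itemize}
\item If $H(\hat{\mathcal{G}})\subsetneq H(\mathcal{G}^*)$, then $V^*\subsetneq\hat V$.
\item A proper closed subvariety of an irreducible variety has strictly smaller dimension, so $\dim(\mathcal{G}^*)<\dim(\hat{\mathcal{G}})$.
\item This contradicts Step 1.
\end{itemize}
Therefore $H(\hat{\mathcal{G}})=H(\mathcal{G}^*)$.

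I expect the main obstacle to be Step 1. It requires showing that the likelihood-ratio statistic between two models that both contain $\Sigma_X$ in their closure is $o_p(\lambda_T)$, even at singular points of these semialgebraic models where standard regular asymptotics fail. My first approach is a direct bound: the log-likelihood-ratio is at most the saturated-versus-true statistic $T\cdot[\tr(S\Sigma_X^{-1})-\log\det(S\Sigma_X^{-1})-m]$, which is $O_p(1)$ and needs no regularity of the model. Second, I need to match $\dim(\mathcal{G})$ with the algebraic dimension of the Zariski closure. I take this identification from how degrees of freedom are defined in \cref{sec:distribution_sets}, namely generic Jacobian rank, which equals the dimension of the image's Zariski closure.
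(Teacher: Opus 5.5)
Your Steps 2 and 3 are the paper's proof. Fitting $\Sigma_X$ plus generalized faithfulness gives $H(\hat{\mathcal{G}})\subseteq H(\mathcal{G}^*)$, and a strict inclusion would force $\dim(\hat{\mathcal{G}})>\dim(\mathcal{G}^*)$, contradicting minimality. The paper asserts that last implication in one line without justification. Your argument is what makes it rigorous: the Zariski closure of the polynomial image of an open, hence Zariski-dense, parameter set is irreducible, so a proper subvariety has strictly smaller dimension, with $\dim(\mathcal{G})$ identified with generic Jacobian rank as in the paper's proof of \cref{proposition:dimension_silva}.

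The real difference is Step 1, and it comes from a misreading. \cref{sec:score} defines $\score_{\textrm{dim}}(\mathcal{G},\mathbf{D})$ as $\dim(\mathcal{G})$ if $\mathcal{G}$ can generate $S$ and $\infty$ otherwise, which is exactly your lexicographic fallback. The penalized-likelihood (BIC) score is introduced separately in \cref{sec:bic_score}. It is not covered by this theorem, and its theoretical justification is explicitly listed as future work. So for the statement at hand Step 1 is immediate: with $S=\Sigma_X$ in the limit, $\mathcal{G}^*$ has finite score, hence $\Sigma_X\in\mathcal{M}(\hat{\mathcal{G}})$ and $\dim(\hat{\mathcal{G}})\le\dim(\mathcal{G}^*)$. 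The ``main obstacle'' you anticipate does not arise.

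What your version would buy, if completed, is a consistency result for the BIC-type score the experiments actually use. Your saturated-model bound is the right regularity-free tool there, since it is uniform over all models whose closure contains $\Sigma_X$. The order-$T$ separation for the remaining models, however, is only established model by model, and $\mathbb{G}^m$ places no bound on the number of latent variables. Before concluding anything about the minimizer, you would need either a bound on the latent count (as in the paper's enumeration algorithms) or a uniform separation argument.
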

\vspace{0.2em}
\begin{remark}
Under generalized faithfulness assumption, \cref{thm:equivalence_equality_constraints} implies that minimizing the scoring function leads to a structure with the same equality constraints (on the marginal over the measured variables) as the true structure.
\end{remark}
\vspace{-0.1em}
In general, relating the estimated structure to the true one, which are algebraic equivalent, can be challenging without any restrictions on the structures. In \cref{sec:1_factor_model,sec:hierarchical_structures}, we show that, under specific structural assumptions, \cref{thm:equivalence_equality_constraints} helps achieve notions of model equivalence that are more fine-grained than algebraic equivalence (including Markov equivalence in \cref{sec:1_factor_model}). Therefore, a general recipe may involve identifying suitable structural assumptions that allow algebraic equivalence to translate into more fine-grained notions of model equivalence. This enables the application of the score based procedure in \cref{thm:equivalence_equality_constraints}, given an appropriate characterization of the degrees of freedom. We give a further discussion of generalized faithfulness and algebraic equivalence in \cref{app:generalized_faithfulness,app:algebraic_equivalence}, respectively.

\vspace{-0.1em}
\subsection{Remark on the BIC Score}\label{sec:bic_score}
The scoring function discussed in \cref{sec:score} is justified in the large sample limit and may not perform well for finite-sample cases. We consider the BIC score~\citep{schwarz1978estimating,chickering2002optimal} that maximizes the likelihood while penalizing the degrees of freedom of structure $\mathcal{G}$:
\[
\score_{\textrm{BIC}}(\mathcal{G},\mathbf{D}) \coloneqq \score_{\mathcal{L}}(\mathcal{G},\mathbf{D})+\frac{\log T}{2} \dim(\mathcal{G}).
\]
where $\score_{\mathcal{L}}(\mathcal{G},\mathbf{D})$ denotes the optimal negative log-likelihood w.r.t. structure $\mathcal{G}$, given by
\begin{equation}\label{eq:score_likelihood}
\score_{\mathcal{L}}(\mathcal{G},\mathbf{D})\coloneqq\min_{\substack{(B,C,\Omega_X):\\ \supp(B)\subseteq \supp(B_\mathcal{G}),\\ \supp(C)\subseteq \supp(C_\mathcal{G}),\\ \Omega_X\in\diag(\mathbb{R}_{> 0}^m)}} \mathcal{L}\left(B, C, \Omega_X; \mathbf{D}\right).
\end{equation}
Since it may not be straightforward to derive a closed-form solution, various numerical solvers or continuous optimization methods, such as  L-BFGS \citep{byrd1995limited} and gradient descent, as well as the expectation-maximization algorithm~\citep{dempster1977maximum}, can be used to compute the maximum likelihood above.

It is worth noting that the BIC score has been widely adopted in score-based causal discovery \citep{chickering2002optimal}. \citet{haughton1988choice} showed that it is an asymptotic approximation for the log marginal likelihood of \emph{curved} exponential families, which include Gaussian DAG models without latent variables \citep{geiger2001stratified,richardson2002ancestral}. In the presence of latent variables, the models are \emph{stratified} exponential families, and complications arise in using BIC for model selection. Although the typical theoretical justifications of using BIC \citep{schwarz1978estimating,haughton1988choice} may not apply for identifying latent variable causal models in our setting, we apply it in place of $\score_{\textrm{dim}}(\mathcal{G},\mathbf{D})$ in our experiments, since the latter is justified in the large sample limit and may not perform well for finite-sample cases. Surprisingly, using the BIC score leads to a superior empirical performance, specifically under the structural assumptions described in \cref{sec:1_factor_model,sec:hierarchical_structures}. This suggests that BIC may be a valid scoring criterion in these cases. Therefore, future works involve studying the theoretical justifications of using BIC score under these structural assumptions.\looseness=-1

Recall that a scoring function is \emph{score equivalent} if every pair of Markov equivalent structures have the same score~\citep{chickering2002optimal}. This is a desirable property for score-based procedure as it implies that we can search in the space of Markov equivalence classes (MECs) instead of DAGs. That is, one does not have to compute the score multiple times for the DAGs in the same MEC, which may help improve the runtime. We show that our scoring functions satisfy such a property, with a proof given in \cref{app:proof_proposition_score_equivalence}.
\vspace{-0.55em}
\begin{restatable}[Score equivalence]{proposition}{PropositionScoreEquivalence}\label{proposition:score_equivalence}
Suppose that DAGs $\mathcal{G}_1$ and $\mathcal{G}_2$ are Markov equivalent. Then, we have $\score_{\textrm{dim}}(\mathcal{G}_1,\mathbf{D})=\score_{\textrm{dim}}(\mathcal{G}_2,\mathbf{D})$ and $\score_{\textrm{BIC}}(\mathcal{G}_1,\mathbf{D})=\score_{\textrm{BIC}}(\mathcal{G}_2,\mathbf{D})$.
\end{restatable}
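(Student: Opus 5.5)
The plan is to show that Markov equivalent DAGs have the same distribution set, $\mathcal{M}(\mathcal{G}_1)=\mathcal{M}(\mathcal{G}_2)$, after a suitable relabeling of the latent variables. Both scoring functions depend on $\mathcal{G}$ only through two quantities: the optimal likelihood $\score_{\mathcal{L}}(\mathcal{G},\mathbf{D})$, which is a minimum over $\mathcal{M}(\mathcal{G})$, and the degrees of freedom $\dim(\mathcal{G})$, which is the dimension of $\mathcal{M}(\mathcal{G})$. The same holds for the likelihood part of $\score_{\textrm{dim}}$ defined in \cref{sec:score}. So equality of distribution sets gives equality of both scores.

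\textbf{Step 1 (reduce to the likelihood over $\mathcal{M}(\mathcal{G})$).}
\cref{lemma:indeterminacy_Omega_L} says that fixing $\Omega_L=I$ loses no generality. Hence the feasible set of covariances in \cref{eq:score_likelihood} is exactly $\mathcal{M}(\mathcal{G})$, and
\[
\score_{\mathcal{L}}(\mathcal{G},\mathbf{D})=\inf_{\Sigma\in\mathcal{M}(\mathcal{G})}\tfrac{T}{2}\bigl(\tr(S\Sigma^{-1})+\log\det\Sigma\bigr).
\]
Relabeling (permuting) the latent variables only permutes the columns of $B$ and conjugates $C$ by a permutation. This leaves $\Sigma_X$ unchanged, so $\mathcal{M}(\mathcal{G})$ is invariant under relabeling. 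The apparent loss of upper-triangularity is harmless: we can take a topological order consistent with the relabeled graph.

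\textbf{Step 2 (Markov equivalence preserves $\mathcal{M}$).}
Markov equivalent DAGs have the same skeleton and v-structures. By Chickering's transformational characterization, one can pass from $\mathcal{G}_1$ to $\mathcal{G}_2$ through a sequence of covered edge reversals, each producing a DAG in the same class. It therefore suffices to treat a single covered edge reversal. Observed variables are sinks and have only latent parents, so every covered edge is an edge $L_j\to L_i$ between latents with $\parents(L_i)=\parents(L_j)\cup\{L_j\}$.

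For such a reversal we use the standard result for Gaussian linear SEMs: the set of joint covariances of $L$ is the same before and after a covered edge reversal. Concretely, take the joint Gaussian over $(L_i,L_j)$ given their common parents and re-factorize it as $L_j\mid L_i,\text{parents}$. This yields valid coefficients with support in the reversed graph and a positive noise variance. Because the map $\Sigma_L\mapsto B\Sigma_L B^\top+\Omega_X$ uses the same $B$-support in both graphs (edges into observed variables are untouched), the distribution sets over $X$ coincide. This is the step requiring the most care. Two points must be checked. First, the reparameterization must map onto, in both directions, the full set allowed by the supports ($\supp\subseteq$, including zero coefficients). Second, positivity of the noise variances must be preserved. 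Both follow from the Schur complement of a positive definite $\Sigma_L$.

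\textbf{Step 3 (conclude).}
With $\mathcal{M}(\mathcal{G}_1)=\mathcal{M}(\mathcal{G}_2)$, the optimal likelihoods are equal. The dimension is equal too, since $\dim(\mathcal{G})$ is defined as the dimension of the semialgebraic set $\mathcal{M}(\mathcal{G})$, i.e.\ the generic rank of the Jacobian of the parameterization. Hence $\score_{\textrm{BIC}}$ coincides for $\mathcal{G}_1$ and $\mathcal{G}_2$. $\score_{\textrm{dim}}$ is a combination of $\score_{\mathcal{L}}$ and $\dim(\mathcal{G})$ (possibly with different sample-size weights), so it coincides as well.

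The main obstacle is making Step 2 rigorous in the presence of the relabeling convention: Markov equivalence here is defined only up to a permutation of the latents. The fix is to first apply the relabeling, which is justified by the invariance in Step 1, and then apply Chickering's theorem to the two DAGs over the common vertex set.
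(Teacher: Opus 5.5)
Your overall route matches the paper's. Both arguments rest on the fact that Markov equivalent DAGs generate the same covariance matrices, so $\mathcal{M}(\mathcal{G}_1)=\mathcal{M}(\mathcal{G}_2)$. The optimal likelihood (with $\Omega_L=I$ fixed via \cref{lemma:indeterminacy_Omega_L}) and $\dim(\mathcal{G})$ then coincide. The paper simply asserts that Markov equivalent DAGs generate the same joint covariances over $(X,L)$ and then marginalizes. You instead justify this through covered edge reversals and a Schur-complement re-factorization, which is a genuine addition.

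However, one claim in your Step 2 is false: it is not true that every covered edge is latent--latent. Suppose $L_1$ is a root and $X_1$ has $L_1$ as its only parent, as in any 1-factor model with a source latent. Then $\mathrm{Pa}(X_1)=\mathrm{Pa}(L_1)\cup\{L_1\}$, so $L_1\to X_1$ is covered. Reversing it gives a Markov equivalent DAG outside $\mathbb{G}^m$, where $\mathcal{M}(\cdot)$ is not defined through $(B,C)$ and your ``$B$-support is untouched'' argument does not apply. The repair is to use the precise form of Chickering's theorem. Let $\Delta$ be the set of edges oriented differently in $\mathcal{G}_1$ and $\mathcal{G}_2$ (after relabeling). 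The theorem gives a sequence of exactly $|\Delta|$ distinct covered edge reversals, so only edges in $\Delta$ are ever reversed. Every observed variable is a sink in both graphs, so latent-to-observed edges are never in $\Delta$. Hence every reversal is latent--latent and every intermediate DAG stays in $\mathbb{G}^m$, which is what your Step 2 needs.

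Also, $\score_{\textrm{dim}}$ is not a weighted combination of $\score_{\mathcal{L}}$ and $\dim(\mathcal{G})$. It equals $\dim(\mathcal{G})$ if $\mathcal{G}$ can generate $S$, and $\infty$ otherwise. Your conclusion for it still holds, because whether $\mathcal{G}$ can generate $S$ depends only on $\mathcal{M}(\mathcal{G})$.
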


\vspace{-0.25em}
\section{Linear 1-Factor Latent Variable Models}\label{sec:1_factor_model}
In the previous section, we show that the scoring function can produce a structure algebraic equivalent to the ground truth. We now discuss how such result helps estimate a structure up to Markov equivalence. In this section, we focus on the structural assumption by \citet{silva2003learning,silva2006learning}.\footnote{In our setting, it may be sufficient to require that each latent variable has at least two measured variables as children.}\looseness=-1
\begin{assumption}[{{\citet{silva2003learning}}}]\label{assumption:silva_graphical_criterion}
Each measured variable has a single latent parent, and each latent variable has at least three measured variables as children.
\end{assumption}
An example illustrating the above assumption is provided in \cref{fig:silva_illustration}. \citet{silva2003learning} proposed a search procedure based on statistical tests of tetrad constraints that can identify structures under this assumption. In this section, we develop a score-based method based on this structural assumption. We first characterize the degrees of freedom of the structure in \cref{sec:1_factor_model_dimension}, as required by the scoring function. We then establish the consistency and provide an exact score-based search procedure in \cref{sec:1_factor_model_exact_search}. We also develop a continuous search procedure in \cref{sec:1_factor_model_continuous_search} that may be more computationally efficient.

\begin{figure}[!t]
\centering
\includegraphics[width=0.49\textwidth]{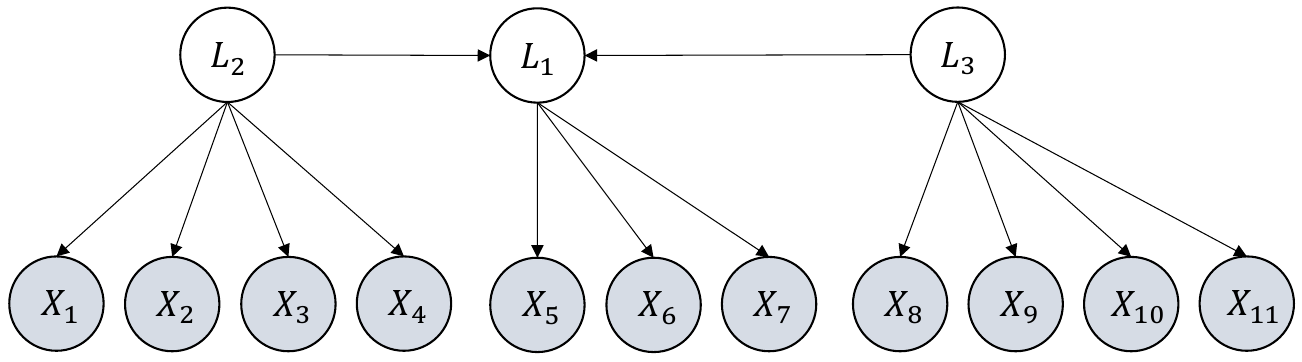}
\caption{Example of 1-factor latent variable model.}
\label{fig:silva_illustration}
\end{figure}

\subsection{Degrees of Freedom}\label{sec:1_factor_model_dimension}
The scoring function requires a proper specification of the degrees of freedom during the search procedure. For the structural assumption in \cref{assumption:silva_graphical_criterion}, the degrees of freedom, as one may expect, equals the number of edges in DAG $\mathcal{G}$ plus the number of measured variables. Here, the number of edges include those among the latent variables and those from the latent variables to the measured ones. The proof follows straightforwardly from parameter identifiability under \cref{assumption:silva_graphical_criterion} \citep{bollen1989general}, which is provided in \cref{app:proposition_dimension_silva} for completeness.
\begin{restatable}[Degrees of freedom]{proposition}{PropositionDimensionSilva}\label{proposition:dimension_silva}
Suppose that DAG $\mathcal{G}$ satisfies  \cref{assumption:silva_graphical_criterion}. Then, $\operatorname{dim}(\mathcal{G})=|\mathcal{G}|+m$.
\end{restatable}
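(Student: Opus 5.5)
We view $\mathcal{M}(\mathcal{G})$ as the image of a polynomial parameterization and compute its dimension as the generic rank of the Jacobian of that map. By \cref{lemma:indeterminacy_Omega_L}, we may set $\Omega_L=I$ without changing $\mathcal{M}(\mathcal{G})$. The remaining free parameters are the nonzero entries of $B$ (one per measured variable, namely its single latent parent), the nonzero entries of $C$ (one per latent edge), and the $m$ diagonal entries of $\Omega_X$. So the parameter space $\Theta$ has dimension exactly $|\mathcal{G}|+m$. The map $\phi:\Theta\to\mathcal{M}(\mathcal{G})$ is polynomial, so $\dim\mathcal{M}(\mathcal{G})\le|\mathcal{G}|+m$. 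Equality holds as soon as $\phi$ is generically finite-to-one, i.e., the parameters are generically locally identifiable. We will show this by recovering the parameters, up to finitely many sign choices, from $\Sigma_X$ for generic parameter values.

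The recovery proceeds in three steps. Write $\Sigma_L=(I-C)^{-1}(I-C)^{-\top}$, and for each measured $X_i$ let $p(i)$ be its unique latent parent with loading $b_i$. Then for $i\neq j$ we have $(\Sigma_X)_{ij}=b_ib_j(\Sigma_L)_{p(i)p(j)}$, and the diagonal is $(\Sigma_X)_{ii}=b_i^2(\Sigma_L)_{p(i)p(i)}+(\Omega_X)_{ii}$.

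\emph{Step 1.} Take three distinct children $X_i,X_j,X_k$ of the same latent $L_a$; these exist by \cref{assumption:silva_graphical_criterion}. The tetrad-style identity
\[
(\Sigma_L)_{aa}\,b_i^2=\frac{(\Sigma_X)_{ij}(\Sigma_X)_{ik}}{(\Sigma_X)_{jk}}
\]
determines $b_i^2(\Sigma_L)_{aa}$ generically, since generically $(\Sigma_X)_{jk}\neq 0$. Subtracting this from $(\Sigma_X)_{ii}$ then determines $(\Omega_X)_{ii}$.

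\emph{Step 2.} The products $b_i\sqrt{(\Sigma_L)_{aa}}$ are therefore determined up to sign. Hence the correlation matrix of $L$ is determined up to sign flips of each latent, because the $(a,c)$ entry equals $(\Sigma_X)_{ij}\big/\big(b_i b_j\sqrt{(\Sigma_L)_{aa}(\Sigma_L)_{cc}}\big)$ for any children $X_i$ of $L_a$ and $X_j$ of $L_c$.

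\emph{Step 3.} Pass from the correlation matrix of $L$ to $\Sigma_L$ itself and then to $C$. This is the main obstacle: with $\Omega_L=I$ fixed, the scales $(\Sigma_L)_{aa}$ are not free. They are pinned down by the requirement $\Sigma_L=(I-C)^{-1}(I-C)^{-\top}$. Ordering latents topologically, the regression of $L_a$ on its parents has unit residual variance, so $(\Sigma_L)_{aa}=1+\operatorname{Var}(\sum_c C_{ac}L_c)$. We will therefore proceed recursively along the topological order. Given $\Sigma_L$ restricted to earlier latents, the coefficients $C_{a,\cdot}$ are generically determined from the correlations by solving a linear system: the regression of $L_a$ on its parents, rescaled by the unknown $\sqrt{(\Sigma_L)_{aa}}$. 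The variance equation is then a scalar equation, $s=1+s\rho$ with $\rho<1$ the squared multiple correlation of $L_a$ on its parents, giving $s=1/(1-\rho)$. This is a unique solution, and it yields $(\Sigma_L)_{aa}$, then $C_{a,\cdot}$, then each $b_i$ up to sign.

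Consequently the fibers of $\phi$ are generically finite (differing only by sign flips), so the Jacobian has full rank $|\mathcal{G}|+m$ on a dense open set. This gives $\dim(\mathcal{G})=|\mathcal{G}|+m$. If needed, we will also note that the generic condition excludes the measure-zero set where some needed covariance or denominator vanishes. Such a set is a proper algebraic subset of $\Theta$, because each of these polynomials is nonzero at some parameter point (e.g., all edge weights positive).
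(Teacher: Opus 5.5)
Your proof is correct and follows the same route as the paper: set $\Omega_L=I$ via the indeterminacy lemma, show the polynomial parameterization is generically finite-to-one, and conclude that the Jacobian generically has full column rank $|\mathcal{G}|+m$, which equals the dimension. The one difference is in the identifiability step: the paper cites Bollen (1989) for identifiability up to sign and permutation, whereas you derive it explicitly via the tetrad ratio and the recursion $s_a=1/(1-\rho)$, which also handles the unit-residual-variance normalization that the citation leaves implicit.
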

To illustrate, the degrees of freedom of the example in \cref{fig:silva_illustration} are simply equal to $24$.
The above property holds in many other settings such as the typical setting without latent confounders \citep{chickering2002optimal}, as well as those with bow-free acyclic mixed graphs \citep{brito2002identification} and cycles (excluding 2-cycles) \citep{amendola2020structure}. Note that such property, while desirable, does not hold in general for structures with latent variables \citep{geiger1996asymptotic}. For instance, the degrees of freedom of the structures that we consider in \cref{sec:hierarchical_structures}  are generally not equal to $|\mathcal{G}|+m$.

\subsection{Consistency and Exact Score-Based Search}\label{sec:1_factor_model_exact_search}
Having characterized the degrees of freedom, we now establish the correctness of score-based approach under \cref{assumption:silva_graphical_criterion} and accordingly develop an exact search procedure. Specifically, under \cref{assumption:silva_graphical_criterion} and the generalized faithfulness assumption, we show that the structure with the optimal score is Markov equivalent to the true structure.
\begin{restatable}[Correctness]{theorem}{TheoremCorrectnessSilvaExact}\label{thm:correctness_silva_exact}
Suppose that the true DAG $\mathcal{G}^*$ and the distribution $\Sigma_X$ satisfy the generalized faithfulness assumption, and that $\mathcal{G}^*$ satisfies \cref{assumption:silva_graphical_criterion}. Let $\hat{\mathcal{G}}$ be a global minimizer of the following optimization problem:
\begin{equation}\label{eq:optimization_silva}
\begin{aligned}
\min_{\mathcal{G}\in\mathbb{G}^m}\quad & \score_{\textrm{dim}}(\mathcal{G},\mathbf{D}) \\
\subjectto \quad & \mathcal{G} \text{ satisfies  \cref{assumption:silva_graphical_criterion}},
\end{aligned}
\end{equation}
where $\dim(\mathcal{G})=|\mathcal{G}|+m$. Then, $\hat{\mathcal{G}}$ and $\mathcal{G}^*$ are Markov equivalent in the large sample limit.
\end{restatable}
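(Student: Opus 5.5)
The plan has two stages. First, apply \cref{thm:equivalence_equality_constraints} restricted to the constrained search space. Second, show that under \cref{assumption:silva_graphical_criterion} algebraic equivalence forces Markov equivalence.

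\textbf{Step 1: the minimizer is algebraic equivalent to $\mathcal{G}^*$.} The proof of \cref{thm:equivalence_equality_constraints} compares the score of $\mathcal{G}^*$ with that of any competitor $\mathcal{G}$. In the large sample limit, a structure with $\Sigma_X\notin\mathcal{M}(\mathcal{G})$ incurs a strictly larger likelihood term. Among structures with $\Sigma_X\in\mathcal{M}(\mathcal{G})$, generalized faithfulness gives $H(\mathcal{G})\subseteq H(\mathcal{G}^*)$, and the dimension penalty then favors structures with the most equality constraints. Since $\mathcal{G}^*$ itself satisfies \cref{assumption:silva_graphical_criterion}, it is feasible for \eqref{eq:optimization_silva}. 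Hence the same comparison, run only over feasible $\mathcal{G}$, shows that any global minimizer $\hat{\mathcal{G}}$ satisfies $\Sigma_X\in\mathcal{M}(\hat{\mathcal{G}})$ and $H(\hat{\mathcal{G}})=H(\mathcal{G}^*)$.

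The comparison must use the correct dimension, and this is supplied by \cref{proposition:dimension_silva}: $\dim(\mathcal{G})=|\mathcal{G}|+m$ for every feasible $\mathcal{G}$. I would state explicitly that the argument of \cref{thm:equivalence_equality_constraints} goes through verbatim on the subclass, because it only needs $\mathcal{G}^*$ to lie in the search space.

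\textbf{Step 2: recover the measurement structure.} Both graphs are 1-factor models, so this step uses rank (tetrad) constraints. For four distinct measured variables $X_i,X_j,X_k,X_l$, the tetrad $\sigma_{ij}\sigma_{kl}-\sigma_{ik}\sigma_{jl}=0$ is implied by the structure exactly when a single latent t-separates the pairs. In a pure 1-factor model this happens, for instance, when all four share the same latent parent.

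I would characterize "$X_i$ and $X_j$ share a latent parent" purely through $H(\mathcal{G})$. Concretely, this holds iff for every pair of other measured variables $X_k,X_l$ the three tetrads on $\{i,j,k,l\}$ that separate $\{i,j\}$ from $\{k,l\}$ behave as the single-factor pattern predicts. This is the Silva et al.\ purity/clustering criterion, and it needs at least three indicators per latent, which \cref{assumption:silva_graphical_criterion} provides. Since $H(\hat{\mathcal{G}})=H(\mathcal{G}^*)$, the partition of measured variables into clusters coincides, up to relabeling of latents. So $B_{\hat{\mathcal{G}}}$ and $B_{\mathcal{G}^*}$ agree after a permutation; in particular both have the same number of latents, since every latent has children.

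\textbf{Step 3: recover the latent structure.} With clusters matched, pick one indicator $X_{a(i)}$ per latent $L_i$. A conditional independence or rank constraint among the latents translates into a rank constraint on submatrices of $\Sigma_X$ built from these indicators, with extra indicators used to stand in for conditioning on a latent. Specifically, $L_i\perp L_j\mid L_{\set{S}}$ holds iff the cross-covariance between $\{X_{a(i)}\}\cup\{X_{a(s)}:s\in\set{S}\}$ and $\{X_{a(j)}\}\cup\{X_{b(s)}:s\in\set{S}\}$ has rank at most $|\set{S}|$, where $b(s)\neq a(s)$ is a second indicator of $L_s$. This follows from the trek-separation characterization of \citet{sullivant2010trek}.

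Hence the d-separation statements among latents are the same in both graphs, and by the classical Verma–Pearl characterization the latent DAGs are Markov equivalent. Combined with identical measurement edges, $\hat{\mathcal{G}}$ and $\mathcal{G}^*$ are Markov equivalent.

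\textbf{Main obstacle.} I expect the hardest part to be making precise the "iff" statements in Steps 2 and 3: showing that each structural feature is equivalent to a specific pattern of membership and non-membership in $H(\mathcal{G})$. The non-membership direction, where a feature is absent, is the delicate one. I would handle it via trek separation: exhibit a generic parameterization where the rank is full whenever no small separating set exists.
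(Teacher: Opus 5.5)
Your argument is correct. Step~1 is exactly the paper's opening step: restrict \cref{thm:equivalence_equality_constraints} to the feasible class, which contains $\mathcal{G}^*$, with \cref{proposition:dimension_silva} guaranteeing that $|\mathcal{G}|+m$ is the true dimension there. One correction: for $\score_{\textrm{dim}}$ there is no likelihood term; a structure that cannot generate $\Sigma_X$ simply gets score $\infty$.

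After obtaining $H(\hat{\mathcal{G}})=H(\mathcal{G}^*)$ the routes differ. The paper deduces that $\hat{\mathcal{G}}$ is itself generalized faithful to $\Sigma_X$. It then treats both graphs as faithful 1-factor models of the same covariance, citing \citet[Corollary~1]{silva2003learning} for the measurement model and \citet[Theorem~20]{silva2006learning} for the latent structure. You instead show directly that, within the class of \cref{assumption:silva_graphical_criterion}, $H(\mathcal{G})$ determines both the clustering and all d-separation relations among the latents. In other words, algebraic equivalence implies Markov equivalence in that class. The paper's route is shorter but rests on external identifiability results whose hypotheses must be matched. Yours is self-contained, never needs faithfulness of $\hat{\mathcal{G}}$, and shows exactly which equality constraints carry the structure. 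Incidentally, your Steps~2 and~3 use only two indicators per latent.

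Three points to tighten.

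\textbf{Step~2.} Only one tetrad separates $\{i,j\}$ from $\{k,l\}$. Moreover, the ``single-factor pattern'' of all three tetrads vanishing fails when $X_k,X_l$ lie in a different cluster. State the criterion as: $X_i,X_j$ share a latent parent iff $\sigma_{ik}\sigma_{jl}-\sigma_{il}\sigma_{jk}\in H(\mathcal{G})$ for all distinct $k,l\notin\{i,j\}$.
The forward direction holds because $X_i,X_j$ covary with everything else only through their common parent, so the $2\times 2$ cross-covariance has rank one.
For the converse, suppose their parents are $L_a\neq L_b$. Take a sibling $X_k$ of $X_i$ and a sibling $X_l$ of $X_j$. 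The tetrad then equals $B_{i,a}B_{k,a}B_{j,b}B_{l,b}\bigl((\Sigma_L)_{a,a}(\Sigma_L)_{b,b}-(\Sigma_L)_{a,b}^2\bigr)$, which is nonzero for nonzero loadings since $\Sigma_L$ is positive definite.

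\textbf{Step~3.} This is fine, and you do not even need trek separation. The row and column indicator sets are disjoint, so the submatrix equals $D_1(\Sigma_L)_{\{i\}\cup\set{S},\{j\}\cup\set{S}}D_2$ with diagonal loading matrices $D_1,D_2$. Its rank is at most $|\set{S}|$ iff the partial covariance of $L_i,L_j$ given $L_{\set{S}}$ vanishes (Schur complement). This holds for all parameters iff $L_i$ and $L_j$ are d-separated by $L_{\set{S}}$.

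\textbf{Conclusion.} State explicitly why matching clusters plus Markov equivalent latent DAGs give Markov equivalence of the full graphs. Each measured variable is a sink with a single parent, so it lies in no v-structure.
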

The proof can be found in \cref{app:thm_correctness_silva_exact}, which leverages \cref{thm:equivalence_equality_constraints} that shows how the scoring function produces a structure that is algebraic equivalent to the true structure. Moreover, the theorem above indicates that one could perform exact search for all structures under \cref{assumption:silva_graphical_criterion}. A naive approach is to iterate over all possible structures and check if each of them satisfy \cref{assumption:silva_graphical_criterion}. This may be computationally infeasible because much of the time may be spent on structures that do not fall within the model class.

The question is then how to efficiently enumerate and perform exact search for these structures. We provide an algorithm to do so in \cref{alg:enumerate_silva_structures}. Leveraging the score equivalence property in \cref{proposition:score_equivalence}, we consider only structures that are not Markov equivalent to one another, since they are indistinguishable based on \cref{thm:correctness_silva_exact} and give rise to the same score. First, we generate the possible structures $C_\mathcal{G}$ among the latent variables that are not Markov equivalent to one another. To construct the structure $B_\mathcal{G}$ from latent variables to measured variables, we then find all ordered partitions of measured variables and add each subset from the partition to be the children of each latent variable. We compute the score for each structure enumerated by \cref{alg:enumerate_silva_structures}, and find the structure with the optimal score. Under \cref{thm:correctness_silva_exact}, such an exact search procedure will output a structure Markov equivalent to the true one.

\begin{algorithm}[tb]
   \caption{Enumerating structures under \cref{assumption:silva_graphical_criterion}}
   \label{alg:enumerate_silva_structures}
\begin{algorithmic}
   \STATE {\bfseries Input:} Measured variables $X_1,\dots,X_m$
   \STATE {\bfseries Output:} Set of DAGs $\set{A}$
   \STATE Initialize $\set{A}$ as an empty set;
   \FOR{$n=1$ {\bfseries to} $\lfloor m/3\rfloor$}
   \FOREACH{latent MEC with $n$ variables}
   \STATE Generate a latent DAG $C_\mathcal{G}$ from the latent MEC;
   \FOREACH{ordered $n$-partition $(\set{P}_j)_{j=1}^n$ of $\{X_i\}_{i=1}^m$}
   \IF{$|\set{P}_j|\geq 3$ for $j=1,\dots,n$}
   \STATE Construct DAG $\mathcal{G}$ with latent DAG $C_\mathcal{G}$ and each latent $L_j$ pointing to the variables in $\set{P}_j$;
   \IF{$\mathcal{G}$ is not Markov equivalent to all DAGs in $\set{A}$}
   \STATE $\set{A}\leftarrow\set{A}\cup\{\mathcal{G}\}$;
   \ENDIF
   \ENDIF
   \ENDFOR
   \ENDFOR
   \ENDFOR
   \STATE {\bfseries return} set $\set{A}$
\end{algorithmic}
\end{algorithm}

\subsection{Continuous Search}\label{sec:1_factor_model_continuous_search}
The exact search procedure presented in the previous section requires computing the score for each structure satisfying the structural assumption, which can be computationally intensive when there is a large number of variables. For instance, when using the BIC score, each computation involves solving a continuous optimization problem in \cref{eq:score_likelihood}; the same applies to $\score_{\textrm{dim}}(\mathcal{G},\mathbf{D})$. This is inherent to discrete score-based search that assigns a score to each structure. A key question naturally arises: how do we unify the structure search part and likelihood computation into a single continuous optimization problem? Such a unified procedure helps reduce the computational burden of separately computing the score for each structure in a discrete search. Furthermore, this aligns with recent studies in continuous optimization for causal discovery~\citep{zheng2018notears,ng2020role,vowels2022dags}.

We first provide a reformulation of \cref{eq:optimization_silva} with the BIC score that is more amenable to continuous optimization. The key lies in characterizing the penalty term $|\mathcal{G}|$ and the constraint involving \cref{assumption:silva_graphical_criterion}. Specifically, we solve the following constrained optimization problem:
\begin{align}
\min_{\substack{M_B\in\{0,1\}^{m\times \bar{n}},\nonumber \\
M_C\in\{0,1\}^{\bar{n}\times \bar{n}},\nonumber \\
B\in\mathbb{R}^{m\times \bar{n}},C\in\mathbb{U}^{\bar{n}},\nonumber\\
\Omega_X\in\diag(\mathbb{R}_{> 0}^m)}}\quad & \Big(\frac{1}{T}\mathcal{L}\left(M_B\odot B, M_C\odot C, \Omega_X; \mathbf{D}\right)\nonumber\\[-7ex]
&\qquad +\lambda \|M_B\|_1+\lambda \|M_C\|_1 \Big)\nonumber\\[2ex]
\subjectto \quad & \sum_{k=1}^{\bar{n}} (M_B)_{i,k}-1=0 ,\; i\in[m],\label{eq:continuous_optimization_silva}\\
& \Bigg(\bigg(\sum_{k=1}^m (M_B)_{k,j}+\sum_{k=1}^{\bar{n}} (M_C)_{k,j}\bigg)\nonumber\\
& \qquad \bigg(\sum_{k=1}^m (M_B)_{k,j}-3\bigg)\Bigg)\geq 0,\; j\in[\bar{n}],\nonumber
\end{align}
where $\bar{n}=\lfloor m/3\rfloor$ is an upper bound of the number of latent variables and $\lambda=\log T/2T$. In the formulation above, the matrices $M_B$ and $M_C$ can be viewed as the support matrices of $B$ and $C$; they act as binary masks which indicate which edges are present in the structure. Furthermore, the two constraints in \cref{eq:continuous_optimization_silva} serve as a characterization of \cref{assumption:silva_graphical_criterion} using the support matrices $M_B$ and $M_C$. Specifically, the first constraint requires each row of $M_B$ to have one nonzero entry (i.e., each measured variable has one a single latent parent). The second constraint requires each column of $M_B$ and $M_C$ to satisfy the following: either the column of $M_B$ has at least three nonzero entries, or the column of $M_B$ and $M_C$ have no nonzero entries (i.e., each latent variable has at least $3$ measured variables as children, or no child at all).

We now discuss how to solve \cref{eq:continuous_optimization_silva} using continuous constrained optimization procedure. We first introduce slack variable $t_i\geq 0$ and convert the inequality constraints into equality constraints. To estimate the binary matrices $M_B$ and $M_C$, we apply the Gumbel-Softmax technique~\citep{maddison2017concrete,jang2017gumbelsoftmax} that is widely used to sample and approximate samples from a categorical distribution, which has also been adopted in continuous optimization approaches for causal discovery \citep{ng2022masked,brouillard2020differentiable}. Specifiaclly, we apply Gumbel-Sigmoid for each entry of $M_C$, and Gumbel-Softmax with $\bar{n}$ categories for each row of $M_B$; the latter also incorporates the first constraint of \cref{eq:continuous_optimization_silva} that requires each row of $M_B$ to have one nonzero entry. The resulting continuous constrained optimization problem can then be solved using standard methods such as augmented Lagrangian method and quadratic penalty method \citep{bertsekas1982constrained,bertsekas1999nonlinear,nocedal2006numerical}. These methods transform the constrained problem into a series of unconstrained problems, each of which can be solved via continuous optimization methods such as gradient descent or L-BFGS \citep{byrd1995limited}. In this work, we adopt augmented Lagragian method that is commonly used in causal discovery with continuous optimization \citep{zheng2018notears,vowels2022dags}. 

\vspace{-0.55em}
\section{Linear Latent Hierarchical Structures}\label{sec:hierarchical_structures}
\vspace{-0.15em}
The structural assumption in \cref{sec:1_factor_model} requires (i) each measured variable to have only one latent parent and (ii)~each latent variable to have measured children. In real-world cases, the structure may be more complex -- the measurement model may not be a tree and some latent variables may not have measured children. Thus, we also consider a more general assumption formulated by \citet{huang2022latent}.

We use $\mathbf{L}$ to denote a set of latent variables and write $\children(\set{L})\coloneqq\bigcup_{L_i \in \set{L}} \children(L_i)$,  $\parents(\set{L})\coloneqq\bigcup_{L_i \in \set{L}} \parents(L_i)$, and $\descendant(\set{L})\coloneqq\bigcup_{L_i \in \set{L}} \descendant(L_i)$, as their parents, children, and descendants, respectively. We now explain the notions of pure children and latent atomic cover, which serve as the fundamental building blocks of the whole structure.

\begin{definition} [Pure children set \citep{huang2022latent}]
Let $\purechildren(\set{L})$ denote the collection of pure-child sets of $\set{L}$.
We say that $\set{V}$ is a pure-child set of $\set{L}$, written
$\set{V} \in \purechildren(\set{L})$, if (i)
$\set{V} \cap \set{L} = \emptyset$, (ii) $\set{V} \subseteq \children(\set{L})$,
(iii) $\parents(\set{V}) = \set{L}$, and (iv)
$\descendant(\set{V}) \cap \set{L} = \emptyset$.
\label{definition:pch}
\end{definition}

\begin{definition} [Latent atomic cover \citep{huang2022latent}]
  Let $\set{L}$ be a set of latent variables in $\graph$ with $|\set{L}|=k$.
  $\set{L}$ is a latent atomic cover 
    if the following conditions hold:
    \vspace{-1.1em}
  \begin{itemize}
      \item [(i)] There exists 
      a set of variables $\set{C}$  such that
      $\set{C} \in \purechildren(\set{L})$
      and 
     $|\set{C}|\geq k+1$.
     \vspace{-0.5em}
          \item [(ii)] 
    If $\node{V}\in \parents(\mathbf{L})$, then $\node{V}$ is also the parent of all elements in $\set{L}$. Plus, there exists a  set $\set{N}$ with $|\set{N}|\geq k+1$  
    such that $\set{N}\cap\set{L}=\emptyset$,  every element in $\set{N}$ is a neighbor of every element in $\set{L}$,
    and $\set{L}$ d-separates $\set{N}$ and $\set{C}$.
   \vspace{-0.5em}
\item [(iii)] There does not exist a partition of $\set{L}$
such that all elements in the partition are latent atomic covers.
\vspace{-0.5em}
  \end{itemize}
  \vspace{-0.5em}
\label{definition:lac}
\end{definition}
Having introduced the required notions, we now provide the structural assumptions considered by \citet{huang2022latent}.
\begin{assumption}[{Identifiable linear latent hierarchical graph~\citep{huang2022latent}}]
\label{assumption:llh_graphical_criterion}
A graph $\graph$ is an identifiable 
linear latent hierarchical graph if (i)
every latent variable $L_i$ belongs to at least one latent atomic cover and there is no triangle structure in the graph,
and (ii)
if there exists a set of variables $\set{V}$
such that every variable in $\set{V}$ is a collider of two latent atomic covers $\set{L}_1$, $\set{L}_2$, and denote by $\set{T}$ the minimal set of variables that d-separates $\set{L_1}$ from $\set{L_2}$, then we must have 
$|\set{V}| + |\set{T}| \geq |\set{L}_1|+|\set{L}_2|$.
\end{assumption}

The assumption above requires that each latent variable belongs to at least one latent atomic cover, since a latent atomic cover is the minimal identifiable substructure in a graph using rank constraints of covariance over observed variables. Also, the assumption requires certain graphical patterns that are related to the common children across different latent atomic covers for the identifiability of the whole latent structure. The assumption above may be more general than Assumption \ref{assumption:silva_graphical_criterion} as it allows each measured variable to have multiple latent variables together as parents, and also allow some latent variables to not have any measured child at all; see \cref{app:structural_assumptions} for more details.  An example is given in \cref{fig:llh_illustration}. Under the assumption above, \citet{huang2022latent} developed a constraint-based method based on rank deficiency test to estimate the equivalence class of the true structure.\looseness=-1

In this section, we develop a score-based method under \cref{assumption:llh_graphical_criterion}. We characterize the degrees of freedom in \cref{sec:hierarchical_structures_dimension} and provide an exact search method in \cref{sec:hierarchical_structures_exact_search}. We do not provide a continuous search method for this structural assumption, since it cannot be straightforwardly formulated as inequality constraints, similar to \cref{eq:continuous_optimization_silva}.\looseness=-1

\begin{figure}[!t]
\centering
\includegraphics[width=0.44\textwidth]{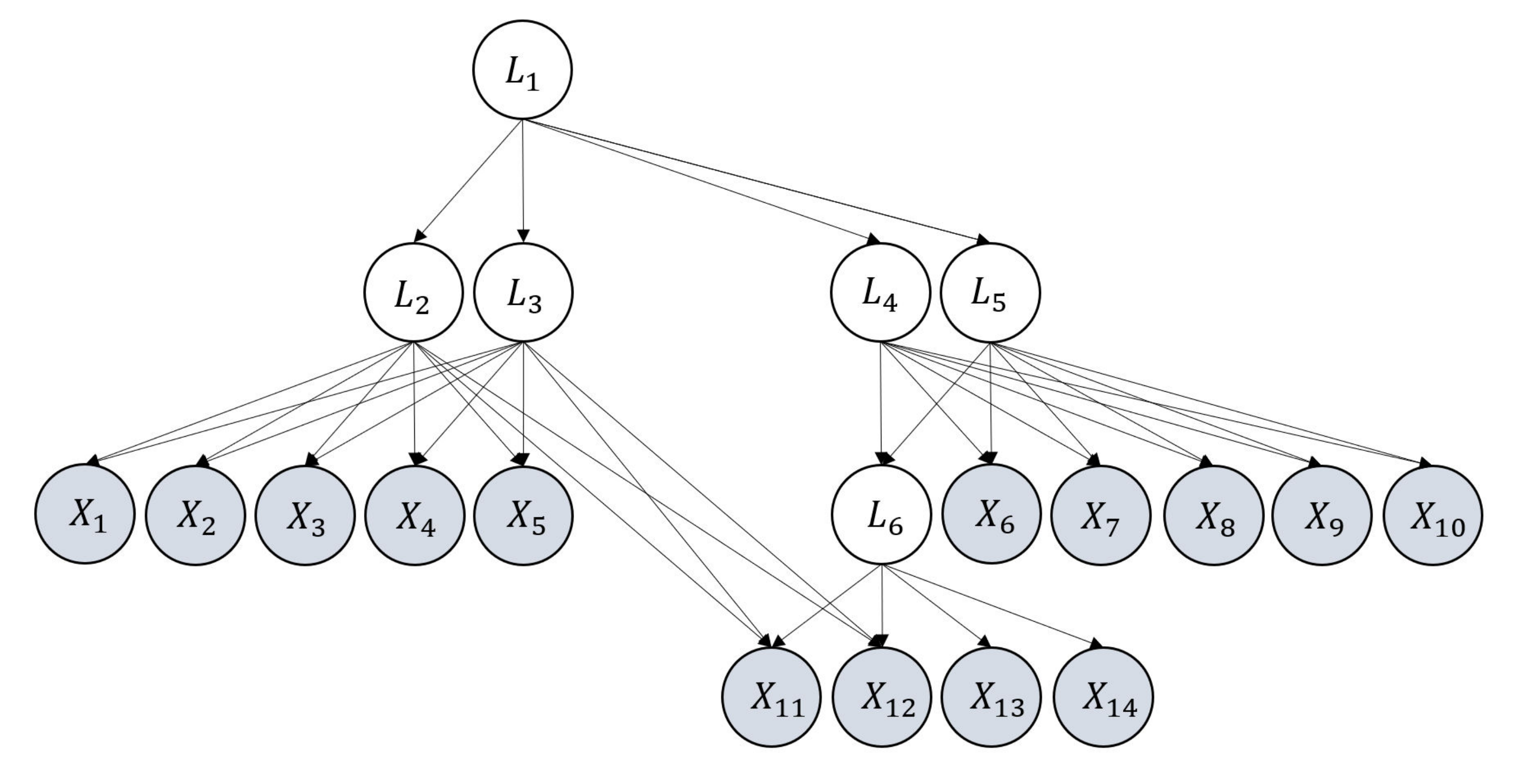}
\caption{Example of latent hierarchical structure.}
\label{fig:llh_illustration}
\end{figure}

\subsection{Degrees of Freedom}\label{sec:hierarchical_structures_dimension}
As noted in \cref{sec:score,sec:1_factor_model}, a key ingredient of the score-based method is the specification of the degrees of freedom. It may be natural to expect that the degrees of freedom are equal to the number of edges and measured variables, similar to the structural assumption considered in \cref{sec:1_factor_model} and the standard setting without latent variables~\citep{chickering2002optimal}. However, this property does not hold for latent hierarchical structures, as illustrated by the following lemma.
\begin{restatable}{proposition}{PropositionReducedDimension}\label{proposition:reduced_dimension}
Suppose that DAG $\mathcal{G}$ follows the linear latent variable causal model in \cref{eq:linear_sem}. Suppose also that there exist $k\geq 2$ latent variables in $\mathcal{G}$ with the same set of parents and children, where either the number of parents or children is at least $k$. Then, $\dim(\mathcal{G})\leq |\mathcal{G}|+m-k(k-1)/2$.
\end{restatable}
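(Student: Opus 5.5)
The plan is to exhibit a continuous group of symmetries of the parameterization that preserves both the supports and $\Sigma_X$. Its orbits will have dimension $k(k-1)/2$, and a fiber-dimension argument then bounds the dimension of the image $\mathcal{M}(\mathcal{G})$.

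\textbf{Step 1: normalize and count parameters.} By \cref{lemma:indeterminacy_Omega_L}, every element of $\mathcal{M}(\mathcal{G})$ is generated with $\Omega_L=I$, with the same supports for $B$ and $C$. So $\mathcal{M}(\mathcal{G})$ is the image of the polynomial map
\[
\phi:(B,C,\Omega_X)\mapsto B(I-C)^{-1}(I-C)^{-\top}B^\top+\Omega_X .
\]
Its domain is an open subset of $\mathbb{R}^{|\mathcal{G}|+m}$: one coordinate per edge and one per diagonal entry of $\Omega_X$. Hence $\dim(\mathcal{G})$ equals the generic rank of the Jacobian of $\phi$. It therefore suffices to show that at generic points the fiber of $\phi$ contains a smooth submanifold of dimension $k(k-1)/2$.

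\textbf{Step 2: structure of the $k$ latents.} Let $\set{L}=\{L_1,\dots,L_k\}$ share parent set $\set{P}$ and child set $\set{K}$. There are no edges inside $\set{L}$: if $L_i\to L_j$, then $L_i\in\parents(L_j)=\parents(L_i)$, which is impossible. So the structural equations read $L_{\set{L}}=A\,V_{\set{P}}+E_{\set{L}}$, with $A\in\mathbb{R}^{k\times|\set{P}|}$ fully supported (a block of $C$) and $\operatorname{Cov}(E_{\set{L}})=I$. Each child $V\in\set{K}$, latent or measured, receives $w_V^\top L_{\set{L}}$, where $w_V\in\mathbb{R}^k$ is fully supported. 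Collect these rows into $W\in\mathbb{R}^{|\set{K}|\times k}$.

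\textbf{Step 3: the orthogonal action.} For $Q\in O(k)$, replace $L_{\set{L}}$ by $QL_{\set{L}}$. This sends $A\mapsto QA$ and $W\mapsto WQ^\top$, and leaves all other entries of $B,C,\Omega_X$ fixed. The new noise $QE_{\set{L}}$ still has covariance $I$ and independent coordinates (Gaussian and white). The contribution $WL_{\set{L}}=WQ^\top QL_{\set{L}}$ to every child is unchanged, so the joint law of all non-$\set{L}$ variables, and in particular $\Sigma_X$, is unchanged. Acyclicity is preserved because the graph is unchanged.

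The remaining concern is support preservation. At a generic parameter all entries of $A$ and $W$ are nonzero, so for $Q$ in a neighborhood of $I$ the entries of $QA$ and $WQ^\top$ stay nonzero. Hence a neighborhood of the identity in $SO(k)$ acts within the parameter domain and within a single fiber of $\phi$.

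\textbf{Step 4: freeness and dimension count; the main obstacle.} The delicate point is that the local orbit must genuinely have dimension $k(k-1)/2=\dim SO(k)$, that is, the infinitesimal action must be injective. If $|\set{K}|\ge k$, then generically $W$ has rank $k$. A tangent vector $S$ (skew-symmetric) with $WS^\top=0$ then forces $S=0$. Symmetrically, if $|\set{P}|\ge k$, then $A$ has generic rank $k$ and $SA=0$ forces $S=0$. This is exactly where the hypothesis on the number of parents or children is used. With it, the orbit map is an immersion near $I$, so the fiber through a generic point contains a $k(k-1)/2$-dimensional submanifold. The tangent vectors of this submanifold lie in the kernel of $d\phi$, so generically $\operatorname{rank} d\phi\le |\mathcal{G}|+m-k(k-1)/2$. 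Since $\mathcal{M}(\mathcal{G})$ is a semialgebraic image, its dimension equals the generic Jacobian rank, which gives $\dim(\mathcal{G})\le|\mathcal{G}|+m-k(k-1)/2$.
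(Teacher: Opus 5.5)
Your proof is correct. It uses the same underlying symmetry as the paper: rotating the $k$ latents by $Q\in O(k)$, i.e.\ $B\mapsto BU$, $C\mapsto U^\top C U$ as in \cref{lemma:orthogonal_transformation}. Where you differ is in how that symmetry is turned into a dimension bound.

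The paper \emph{fixes the gauge}. In \cref{lemma:removing_edges} it stacks the coefficients from $\set{L}$ to its children into a matrix $D$ and takes a QR-type factorization $D=\tilde{D}Q$ with $\tilde{D}$ lower triangular. The resulting $k(k-1)/2$ forced zeros sit in the same positions for every parameterization. Deleting the corresponding edges therefore gives a subgraph $\tilde{\mathcal{G}}$ with $\mathcal{M}(\tilde{\mathcal{G}})=\mathcal{M}(\mathcal{G})$, and the bound follows from the trivial estimate $\dim(\tilde{\mathcal{G}})\le|\tilde{\mathcal{G}}|+m$.

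You instead \emph{count the orbit}. Injectivity of $S\mapsto(SA,WS^\top)$ on skew-symmetric $S$ at generic points places a $k(k-1)/2$-dimensional subspace inside $\ker d\phi$. The Jacobian-rank characterization of dimension then finishes the argument.

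What the paper's route buys:
\begin{itemize}
\item It is more elementary, with no Jacobian or genericity arguments.
\item It gives a stronger, structural conclusion: an explicit sparser graph with the same distribution set.
\item That sparser graph is exactly what is reused inductively, one group at a time in reverse causal order, in the proof of \cref{proposition:dimension_llh}. Handling several groups at once with your method would require checking injectivity of the joint action of $\prod_j O(k_j)$, whose effects interact on the blocks of $C$ between groups.
\end{itemize}

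What your route buys: it is coordinate-free, and it isolates exactly what the size hypothesis is for. For example, a nonzero skew-symmetric matrix has rank at least $2$, so your injectivity argument already works when $W$ or $A$ has rank only $k-1$.

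One small simplification: you do not need to restrict to a neighborhood of $I$. Every entry of $QA$ and $WQ^\top$ lies on an edge of $\mathcal{G}$, and the parameter domain only requires $\supp(B)\subseteq\supp(B_\mathcal{G})$ and $\supp(C)\subseteq\supp(C_\mathcal{G})$. So all of $O(k)$ acts on the domain.
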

The proof is given in \cref{app:proposition_reduced_dimension}. As shown in the proof, under these circumstances, there exists an alternative structure $\tilde{\mathcal{G}}$ obtained by removing $k(k-1)/2$ edges (corresponding to edges involving the parents or children) from $\mathcal{G}$ such that $\tilde{\mathcal{G}}$ leads to the same distribution set as $\mathcal{G}$, i.e., $\mathcal{M}(\tilde{\mathcal{G}})=\mathcal{M}(\mathcal{G})$. Thus, the degrees of freedom in this scenario are reduced compared to the number of edges and measured variables. This reduction can be intuitively explained by the redundancy of certain edges in such situations. For instance, the degrees of freedom for the structure are $44$ instead of $46$ (i.e., the sum of number of edges and measured variables), because the variables $\{L_2,L_3\}$ and $\{L_4,L_5\}$ (i.e., in the same atomic covers) have the same parents and children.\looseness=-1

\begin{algorithm}[tb]
   \caption{Degrees of freedom under \cref{assumption:llh_graphical_criterion}}
   \label{alg:count_llh_dimension}
\begin{algorithmic}
   \STATE {\bfseries Input:} Structure $\mathcal{G}$
   \STATE {\bfseries Output:} Degrees of freedom $d$
   \STATE Initialize degrees of freedom $d\leftarrow |\mathcal{G}|+m$;
   \FOREACH{subset $\mathbf{L}$ of latent variables where $|\mathbf{L}|\geq 2$}
   \IF{variables in $\mathbf{L}$ have the same parents and children {\bfseries and} no proper superset of $\mathbf{L}$
    satisfies the previous condition}
   \STATE $d\leftarrow d - |\mathbf{L}|(|\mathbf{L}|-1)/2$;
   \ENDIF
   \ENDFOR
   \STATE {\bfseries return} degrees of freedom $d$
\end{algorithmic}
\end{algorithm}

Building on the result above, we develop a procedure in \cref{alg:count_llh_dimension} to calculate the degrees of freedom under \cref{assumption:llh_graphical_criterion}. Specifically, the algorithm iterates over all subsets of latent variables and calculate the degrees of freedom that can be reduced. The following proposition shows that the algorithm outputs the upper bound of the degrees of freedom, with a proof provided in \cref{app:proposition_dimension_llh}.
\vspace{-0.1em}
\begin{restatable}[Degrees of freedom]{proposition}{PropositionDimensionLLH}\label{proposition:dimension_llh}
Suppose that DAG $\mathcal{G}$ satisfies  \cref{assumption:llh_graphical_criterion}. Then, \cref{alg:count_llh_dimension} outputs the upper bound of  $\operatorname{dim}(\mathcal{G})$.
\end{restatable}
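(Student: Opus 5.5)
The plan is to show $\dim(\mathcal{G})\leq d$, where $d$ is the output of \cref{alg:count_llh_dimension}. I will exhibit a reparameterization of $\mathcal{M}(\mathcal{G})$ whose number of free parameters is exactly $d$. Then $\dim(\mathcal{G})$, the dimension of the semialgebraic image, is at most the dimension of the parameter domain, which is $d$.

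First I will check that the subtractions in \cref{alg:count_llh_dimension} are well defined and do not double count. Define $L_i\sim L_j$ if $L_i$ and $L_j$ have the same parents and children. This is an equivalence relation on the latent variables. A subset $\mathbf{L}$ with $|\mathbf{L}|\geq 2$ that is maximal among sets of latents sharing the same parents and children is exactly an equivalence class of size at least two. Hence the algorithm subtracts $|\mathbf{L}|(|\mathbf{L}|-1)/2$ once for each such class, and the classes are disjoint.

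Next, I will note that within any class $\mathbf{L}$ there are no edges among its members, since members share parents and children and $\mathcal{G}$ is acyclic. Therefore the rows of $C$ indexed by $\mathbf{L}$ involve only the common parent set $\mathbf{P}$. The columns of $C$ and $B$ indexed by $\mathbf{L}$ involve only the common child set $\mathbf{Q}$.

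For a fixed class $\mathbf{L}$ with $|\mathbf{L}|=k$, I will use the rotational indeterminacy already used in \cref{proposition:reduced_dimension}. Having set $\Omega_L=I$ by \cref{lemma:indeterminacy_Omega_L}, replace $L_{\mathbf{L}}$ by $QL_{\mathbf{L}}$ with $Q$ orthogonal. This maps the incoming block $C_{\mathbf{L},\mathbf{P}}$ to $QC_{\mathbf{L},\mathbf{P}}$ and the outgoing blocks $C_{\mathbf{Q}_L,\mathbf{L}}, B_{\mathbf{Q}_X,\mathbf{L}}$ to the same blocks multiplied on the right by $Q^\top$. The noise of the block stays isotropic, and $\Sigma_X$ is unchanged. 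The condition in \cref{proposition:reduced_dimension} gives $\geq k$ parents or $\geq k$ children, and \cref{assumption:llh_graphical_criterion} guarantees this: each $\mathbf{L}$ lies in a latent atomic cover, which has at least $k+1$ pure children and at least $k+1$ neighbours.

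I will then choose $Q$ by a QR/Givens argument so that the relevant $k\times k$ block becomes triangular. This block is the outgoing block restricted to $k$ children, or the incoming block restricted to $k$ parents. Triangularizing it zeroes $k(k-1)/2$ entries. This is precisely the construction of the alternative graph $\tilde{\mathcal{G}}$ in \cref{proposition:reduced_dimension}, so $\mathcal{M}(\tilde{\mathcal{G}})=\mathcal{M}(\mathcal{G})$ for that class.

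Finally, I will apply this class by class, which is the main point needing care. The $Q$ for class $\mathbf{L}_1$ acts only on the rows and columns indexed by $\mathbf{L}_1$. The zeros created for class $\mathbf{L}_2$ lie in entries indexed by $\mathbf{L}_2$ together with its parents or children. These entries could include $\mathbf{L}_1$-indexed entries if $\mathbf{L}_1$ is a parent set of $\mathbf{L}_2$.

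I will handle this by processing classes in a topological order of the condensed DAG. For each class I will zero entries only in its outgoing block toward children, or only in its incoming block, chosen so that later rotations never touch those zeros. The cleanest choice is to always use the incoming rows $C_{\mathbf{L},\cdot}$ when there are at least $k$ parents, and the child columns otherwise. Then I will verify that a later rotation of another class $\mathbf{L}'$ multiplies those entries only on the side indexed by $\mathbf{L}'$. If the zero pattern is placed on columns corresponding to a fixed subset of $k$ entries of $\mathbf{P}$ or $\mathbf{Q}$ ordered suitably, a block-triangular argument shows the rotation of $\mathbf{L}'$ can be chosen to preserve the earlier zeros, since it acts on complementary coordinates.

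The resulting graph $\tilde{\mathcal{G}}$ has $|\mathcal{G}|-\sum_{\mathbf{L}}|\mathbf{L}|(|\mathbf{L}|-1)/2$ edges and satisfies $\mathcal{M}(\tilde{\mathcal{G}})=\mathcal{M}(\mathcal{G})$. Its parameter count is $d$, and therefore $\dim(\mathcal{G})\leq d$.
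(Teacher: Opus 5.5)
Your bookkeeping is fine: classes are equivalence classes, there are no edges inside a class, and you have the right effect of an orthogonal change of basis on $C$ and $B$. The single-class step is also fine. The gap is in the multi-class step you flag yourself.

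Whichever side you triangularize, the rotation of a class $\mathbf{L}'$ acts on \emph{both} its rows $C_{\mathbf{L}',:}$ and its columns $C_{:,\mathbf{L}'}$, $B_{:,\mathbf{L}'}$. This has two consequences. A children-side zero of $\mathbf{L}$ placed in a row of a child class $\mathbf{L}'$ is destroyed by any later nontrivial rotation of $\mathbf{L}'$. A parent-side zero of $\mathbf{L}$ placed in a column of a parent class is destroyed by any later rotation of that class. So the coordinates are not complementary, and no block-triangular choice repairs this. Your rule (parent side if $|\mathbf{P}|\geq k$, children side otherwise) mixes the two sides and fails.

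Here is a counterexample. Take a root cover $\{L_1,L_2\}$ with no parents and children $L_3,\dots,L_8$. Let $\{L_3,L_4\}$, $\{L_5,L_6\}$, $\{L_7,L_8\}$ be atomic covers, each with four measured pure children of its own. This satisfies \cref{assumption:llh_graphical_criterion}, using $\mathbf{C}=\{L_3,L_4,L_5\}$ and $\mathbf{N}=\{L_6,L_7,L_8\}$ for the root.
\begin{itemize}
\item The root has no parents, so its zero must go into the block $C_{\{L_3,\dots,L_8\},\{L_1,L_2\}}$.
\item Each child class has exactly two parents, so your rule also puts its zero into this same block.
\item In either processing order, a later rotation acting on this block must be trivial if it is to keep an earlier zero there. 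For instance, no left rotation gives a generic lower-triangular $2\times 2$ matrix a second zero.
\end{itemize}
You therefore remove only $3$ edges, not the $4$ that \cref{alg:count_llh_dimension} subtracts.

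The repair, which is the paper's argument, has two parts.
\begin{itemize}
\item Use the children side for \emph{every} class. This is the side that \cref{assumption:llh_graphical_criterion} guarantees has at least $k$ members; the parent side can be empty, as for root covers.
\item Process classes in \emph{reverse} causal order, downstream first.
\end{itemize}
An earlier zero then sits at an entry $(c,\ell)$ of $B$ or $C$, with $\ell$ in an earlier class $\mathbf{L}$ and $c$ a child of $\mathbf{L}$. A later class $\mathbf{L}'$ is not a descendant of $\mathbf{L}$, so neither $\ell$ nor $c$ lies in $\mathbf{L}'$. Hence the rotation of $\mathbf{L}'$ never touches that entry.

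Equivalently, $\mathbf{L}'$ still has common parents and children in the already pruned graph. So \cref{lemma:removing_edges} can be applied to that pruned graph directly. Induction then yields a graph with $d-m$ edges and the same distribution set as $\mathcal{G}$, which gives $\dim(\mathcal{G})\leq d$.
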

\vspace{-0.1em}
We conjecture, supported by simulations over $10,000$ examples (by computing the rank of Jacobian matrices \citep{geiger2001stratified}) and the experiments in \cref{sec:experiments}, that the upper bound provided by this algorithm is tight under operator $\mathcal{O}_{\textrm{min}}(\mathcal{O}_{\textrm{skeleton}}(\cdot))$, although a proof seems to involve tools from algebraic statistics and is not straightforward. For instance, \citet{drton2023algebraic} analyzed the degrees of freedom for sparse factor analysis, which is technically complex even with independent latent variables.

\subsection{Consistency and Exact Score-Based Search}\label{sec:hierarchical_structures_exact_search}
With the algorithm to compute the degrees of freedom, we now develop a score-based method to estimate latent hierarchical structures. We first establish the correctness of the score-based method. Under \cref{assumption:llh_graphical_criterion} and the generalized faithfulness assumption, we prove that the structure with the optimal score is Markov equivalent to the true hierarcahical structure, up to certain rank equivalent graph operators. The proof and definition of the operators together with illustrative examples can be found in \cref{app:thm_correctness_llh_exact}.
\begin{restatable}[Correctness]{theorem}{TheoremCorrectnessLLHExact}\label{thm:correctness_llh_exact}
Suppose that the true DAG $\mathcal{G}^*$ and the distribution $\Sigma_X$ satisfy the generalized faithfulness assumption, and that $\mathcal{G}^*$ satisfies \cref{assumption:llh_graphical_criterion}. Let $\hat{\mathcal{G}}$ be a global minimizer of the following optimization problem:
\begin{equation*}\label{eq:optimization_llh}
\begin{aligned}
\min_{\mathcal{G}\in\mathbb{G}^m}& \quad  \score_{\textrm{dim}}(\mathcal{G},\mathbf{D}) \\
\subjectto & \quad  \mathcal{G} \text{ satisfies  \cref{assumption:llh_graphical_criterion}},\\
& \quad  \mathcal{G}=\mathcal{O}_{\textrm{min}}(\mathcal{O}_{\textrm{skeleton}}(\mathcal{G})).
\end{aligned}
\end{equation*}
Then, 
$\mathcal{O}_{\textrm{atomic}}
(\hat{\mathcal{G}})$ and $\mathcal{O}_{\textrm{atomic}}(\mathcal{O}_{\textrm{min}}(\mathcal{O}_{\textrm{skeleton}}(\mathcal{G}^*)))$ are Markov equivalent in the large sample limit.
\end{restatable}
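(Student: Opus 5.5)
The argument has two stages. The first reduces the constrained minimization to algebraic equivalence. The second uses the rank-based identifiability theory of \citet{huang2022latent} to pass from algebraic equivalence to Markov equivalence after the operators are applied.

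\textbf{Stage 1: reduction to algebraic equivalence.} The true graph itself is not in the feasible set, but its canonical representative $\mathcal{G}^\dagger\coloneqq\mathcal{O}_{\textrm{min}}(\mathcal{O}_{\textrm{skeleton}}(\mathcal{G}^*))$ is. We check three facts about $\mathcal{G}^\dagger$.
\begin{itemize}
\item Because the operators are rank-equivalent, $\mathcal{G}^\dagger$ still satisfies \cref{assumption:llh_graphical_criterion}.
\item $\mathcal{G}^\dagger$ is a fixed point of $\mathcal{O}_{\textrm{min}}\circ\mathcal{O}_{\textrm{skeleton}}$, so the second constraint holds.
\item $H(\mathcal{G}^\dagger)=H(\mathcal{G}^*)$. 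Ideally we also show $\Sigma_X\in\mathcal{M}(\mathcal{G}^\dagger)$. If only equality of constraint sets holds, the Zariski closures coincide and $\Sigma_X$ lies in the closure, which is enough asymptotically.
\end{itemize}
So the true distribution lies in the model class of a feasible graph, and we can rerun the argument of \cref{thm:equivalence_equality_constraints} restricted to the feasible set.

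In the large sample limit, the likelihood term of $\score_{\textrm{dim}}$ dominates. Any feasible $\mathcal{G}$ with $\Sigma_X\notin\overline{\mathcal{M}(\mathcal{G})}$ therefore scores strictly worse than $\mathcal{G}^\dagger$. This forces the minimizer $\hat{\mathcal{G}}$ to satisfy $H(\hat{\mathcal{G}})\subseteq H(\mathcal{G}^*)$ by generalized faithfulness, since $\Sigma_X$ must satisfy every constraint of $\hat{\mathcal{G}}$.

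Among graphs that fit, the dimension penalty favours the smallest dimension. A graph with strictly fewer equality constraints than $\mathcal{G}^*$ has a Zariski closure strictly containing that of $\mathcal{G}^*$, hence a larger dimension. This gives $H(\hat{\mathcal{G}})=H(\mathcal{G}^*)=H(\mathcal{G}^\dagger)$.

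There is a subtlety here: $\dim$ is only known through \cref{alg:count_llh_dimension}, which by \cref{proposition:dimension_llh} is an upper bound. I would argue that the true algebraic dimension is what enters $\score_{\textrm{dim}}$. Alternatively, I would restrict to the canonical graphs, where the bound is used as the penalty and both graphs are compared consistently.

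\textbf{Stage 2: from algebraic to Markov equivalence.} Algebraic equivalence gives, in particular, that $\hat{\mathcal{G}}$ and $\mathcal{G}^\dagger$ impose the same rank-deficiency constraints on all cross-covariance submatrices $\Sigma_{\set{A},\set{B}}$. By the trek-separation theorem \citep{sullivant2010trek}, these ranks are determined by the minimal t-separating set sizes.

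Both graphs satisfy \cref{assumption:llh_graphical_criterion}, and the rank-based identifiability result of \citet{huang2022latent} says that such graphs with identical rank constraints on the observed covariance agree up to their equivalence class. Concretely, that class means the same atomic-cover structure, the same measured children of each cover, and the same Markov equivalence class among the covers. Applying $\mathcal{O}_{\textrm{atomic}}$ collapses the internal labelings and edges of each atomic cover that rank information cannot see. Hence $\mathcal{O}_{\textrm{atomic}}(\hat{\mathcal{G}})$ and $\mathcal{O}_{\textrm{atomic}}(\mathcal{G}^\dagger)$ are Markov equivalent after relabeling latents, as the theorem claims.

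\textbf{Main obstacle.} The hard part is Stage 2: turning the \citet{huang2022latent} identifiability, which is phrased for the output of their search procedure, into a statement about two arbitrary graphs with identical rank constraints. I would argue inductively from the leaves upward. At each step the rank conditions determine the lowest-level atomic covers and their pure children in both graphs. We then replace each cover by its children as surrogate observed variables and repeat. The condition $\mathcal{G}=\mathcal{O}_{\textrm{min}}(\mathcal{O}_{\textrm{skeleton}}(\mathcal{G}))$ is used to rule out redundant latents or edges that would otherwise break uniqueness. Part (ii) of \cref{assumption:llh_graphical_criterion} is used to handle colliders between covers.
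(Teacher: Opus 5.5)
Your Stage~1 matches the paper's first step: feasibility of $\mathcal{O}_{\textrm{min}}(\mathcal{O}_{\textrm{skeleton}}(\mathcal{G}^*))$, then rerunning the proof of \cref{thm:equivalence_equality_constraints} on the restricted class to get $H(\hat{\mathcal{G}})=H(\mathcal{G}^*)$. You are more explicit than the paper that $H(\mathcal{G}^\dagger)=H(\mathcal{G}^*)$ is what lets $\mathcal{G}^\dagger$ stand in for $\mathcal{G}^*$ in the dimension comparison. Two small corrections. First, $\score_{\textrm{dim}}$ has no likelihood term; a misfit is encoded by the value $\infty$. Second, ``the operators are rank-equivalent, hence $\mathcal{G}^\dagger$ satisfies \cref{assumption:llh_graphical_criterion}'' is a non sequitur. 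Rank-equivalence says nothing about graphical conditions: $\mathcal{O}_{\textrm{atomic}}$ is rank-equivalent, yet it turns a three-element cover into a triangle. The paper simply takes feasibility of $\mathcal{G}^\dagger$ for granted.

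The genuine gap is Stage~2, which you flag as the main obstacle and leave as an outline of re-deriving \citet{huang2022latent} from the leaves upward. As written, Stage~2 rests on a two-graph statement (``graphs with identical rank constraints agree up to the equivalence class'') that is not what the cited result says. Your induction also does not explain how the $\mathcal{O}_{\textrm{skeleton}}$ and $\mathcal{O}_{\textrm{min}}$ indeterminacies are controlled, so it is not yet a proof.

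The missing idea is that no such statement needs proving. Start from the fact that $H(\hat{\mathcal{G}})=H(\mathcal{G}^*)$, together with generalized faithfulness of $\Sigma_X$ to $\mathcal{G}^*$, makes $\Sigma_X$ generalized faithful to $\hat{\mathcal{G}}$ as well; and $\hat{\mathcal{G}}$ generates $\Sigma_X$. Let $\mathcal{G}'$ be the output of Algorithm~1 of \citet{huang2022latent} run on $\Sigma_X$; it depends only on $\Sigma_X$. Both $\hat{\mathcal{G}}$ and $\mathcal{G}^*$ satisfy \cref{assumption:llh_graphical_criterion} and faithfully generate $\Sigma_X$. So you may apply \citet[Theorem~10]{huang2022latent} with \citet[Theorem~13]{dong2023versatile} twice, once with each graph as the true graph. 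This gives that $\mathcal{O}_{\textrm{atomic}}(\mathcal{G}')$ is Markov equivalent both to $\mathcal{O}_{\textrm{atomic}}(\mathcal{O}_{\textrm{min}}(\mathcal{O}_{\textrm{skeleton}}(\hat{\mathcal{G}})))$ and to $\mathcal{O}_{\textrm{atomic}}(\mathcal{O}_{\textrm{min}}(\mathcal{O}_{\textrm{skeleton}}(\mathcal{G}^*)))$. Transitivity of Markov equivalence and the constraint $\hat{\mathcal{G}}=\mathcal{O}_{\textrm{min}}(\mathcal{O}_{\textrm{skeleton}}(\hat{\mathcal{G}}))$ finish the proof. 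This is the paper's argument. Note that it compares $\hat{\mathcal{G}}$ directly with $\mathcal{G}^*$, which satisfies the hypotheses by assumption, rather than with $\mathcal{G}^\dagger$.
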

\begin{algorithm}[tb]
   \caption{Enumerating structures under \cref{assumption:llh_graphical_criterion}}
   \label{alg:enumerate_llh_structures}
\begin{algorithmic}
   \STATE {\bfseries Input:}  Measured variables $X_1,\dots,X_m$
   \STATE {\bfseries Output:} Set of DAGs $\set{A}$
   \STATE Initialize $\set{A}$ as an empty set;
   \FOR{$n=1$ {\bfseries to} $\bar{n}$} 
   \FOR{partition $\{\set{C}_i\}_{i=1}^l$ of $\{L_i\}_{i=1}^n$ as atomic covers}
   \FOR{DAG $\mathcal{G}_{\set{C}\rightarrow \set{C}}$ among $\{\set{C}_i\}_{i=1}^l$}
   \FOR{DAG $\mathcal{G}_{\set{C}\rightarrow X}$ from $\{\set{C}_i\}_{i=1}^l$ to $\{X_i\}_{i=1}^m$}
   \STATE Construct DAG $\mathcal{G}$ by combining $\mathcal{G}_{\set{C}\rightarrow \set{C}}$ and $\mathcal{G}_{\set{C}\rightarrow X}$;
   \IF{$\mathcal{G}$ satisfies \cref{assumption:llh_graphical_criterion} and is not Markov equivalent to all DAGs in $\set{A}$}
   \STATE $\set{A}\leftarrow\set{A}\cup\{\mathcal{G}\}$;
   \ENDIF
   \ENDFOR
   \ENDFOR
   \ENDFOR
   \ENDFOR
   \STATE {\bfseries return} set $\set{A}$
\end{algorithmic}
\end{algorithm}

\input{tables/f1_skeleton_table}

Based on the theorem above, we develop an exact search procedure for structures under \cref{assumption:llh_graphical_criterion}. We introduce a procedure in \cref{alg:enumerate_llh_structures} for enumeration of these structures,
where $\bar{n}$ is a hyperparameter indicating the maximal number of latent variables. Note that a possible upper bound for $\bar{n}$ is $3m$. Similar to the algorithm developed in \cref{alg:enumerate_silva_structures}, we enumerate only structures that are not Markov equivalent to one another, leveraging the score equivalence property. The whole procedure of  \cref{alg:enumerate_llh_structures} is roughly as follows. We maintain a set of DAGs, $\set{A}$. Given the number of observed variables, we first decide the possible number of latent variables, and then enumerate all possible combinations of atomic covers. For each combination of atomic covers, we enumerate all possible DAGs among atomic covers and all possible DAGs from atomic covers to observed variables. Finally, we combine both enumerated DAGs to get a possible graph $\mathcal{G}$; if $\mathcal{G}$ satisfies \cref{assumption:llh_graphical_criterion} and is not Markov equivalent to all structures in $\set{A}$, we add it to $\set{A}$. Once the search space is constructed, the algorithm identifies the structure with the optimal score, with the degrees of freedom for each structure computed using \cref{alg:count_llh_dimension}.

%% file: sections/3score.tex
\subsection{Formulation of Scoring Function}\label{sec:score}
We now provide the formulation of the scoring function for identifying linear latent variable causal models. Specifically, our score-based method involves searching for the structure with the smallest degrees of freedom that can generate the covariance matrix. Given structure $\mathcal{G}$ with samples $\mathbf{D}$ and empirical covariance matrix $S$, the scoring function is 
\[
\score_{\textrm{dim}}(\mathcal{G},\mathbf{D})\coloneqq  \begin{cases}
\dim(\mathcal{G}) & \text{ if } \mathcal{G} \text{ can generate } S, \\
\infty & \text{ otherwise.}
\end{cases}
\]
To determine whether structure $\mathcal{G}$ can generate $S$, one may minimize the squared errors between $S$ and the covariance matrix parameterized by $\mathcal{G}$, or compare the maximum likelihood w.r.t. $\mathcal{G}$ (e.g., see \cref{eq:score_likelihood}) to the likelihood of $S$. Moreover, we show in \cref{sec:bic_score} that the scoring function satisfies the property of score equivalence.

Similar scoring function has been discussed by~\citet{raskutti2014learning}. Roughly speaking, there may exist multiple structures that can generate the same distribution; the scoring function identifies one of them with the smallest degrees of freedom. In \cref{sec:score_identification,sec:1_factor_model,sec:hierarchical_structures}, we discuss how this scoring function identifies structures up to different types of model equivalence in the large sample limit. Specifically, we show in \cref{sec:score_identification} that it yields a structure that is algebraic equivalent to the ground truth.

Since the key idea is to identify the structure that generates the covariance matrix (in the large sample limit) with the smallest degrees of freedom, different types of scoring functions that can achieve so can also be used. In \cref{sec:bic_score}, we further discuss the use of the BIC score.

%% file: tables/f1_skeleton_table.tex
\begin{table*}[htbp]
\centering
\caption{F1 scores of skeletons across various structural assumptions and sample sizes. For each setting, the top two methods are in bold. For FOFC, the number within the brackets indicates the number of valid runs (for which an error did not occur).}
\label{tab:f1_skeletons}
\begin{small}
\begin{tabular}{cccccccc}
\toprule
Model type & Sample size  & SALAD & SALAD-CS & HUANG & FOFC & GIN \\\midrule
\multirow{5}{*}{\shortstack{1-Factor\\ \\ model}}
 & $100$   & {\bf 0.99\,$\pm$\,0.03} & {\bf 0.97\,$\pm$\,0.07} & 0.50\,$\pm$\,0.20 & 0.90\,$\pm$\,0.12 (8)  & 0.35\,$\pm$\,0.02 \\
 & $300$   & {\bf 0.99\,$\pm$\,0.01} & {\bf 0.99\,$\pm$\,0.02} & 0.85\,$\pm$\,0.15 & 0.98\,$\pm$\,0.03 (9)  & 0.35\,$\pm$\,0.02 \\
 & $1000$  & {\bf 0.99\,$\pm$\,0.01} & {\bf 0.99\,$\pm$\,0.01} & 0.93\,$\pm$\,0.03 & 0.98\,$\pm$\,0.03 (12) & 0.35\,$\pm$\,0.02 \\
 & $3000$  &    {\bf 1\,$\pm$\,0}    & {\bf 0.99\,$\pm$\,0.02} & 0.93\,$\pm$\,0.03 & 0.99\,$\pm$\,0.01 (12) & 0.35\,$\pm$\,0.02 \\
 & $10000$ &    {\bf 1\,$\pm$\,0}    & {\bf 0.99\,$\pm$\,0.02} & 0.93\,$\pm$\,0.03 & 0.99\,$\pm$\,0.01 (11) & 0.37\,$\pm$\,0.07 \\ \midrule
\multirow{5}{*}{\shortstack{Hierarchical\\ \\ structure}}
 & $100$   &  {\bf 0.92\,$\pm$\,0.06} & N/A & {\bf 0.57\,$\pm$\,0.13} & N/A (0) & 0.48\,$\pm$\,0.05 \\
 & $300$   &  {\bf 0.92\,$\pm$\,0.07} & N/A & {\bf 0.66\,$\pm$\,0.09} & N/A (0) & 0.48\,$\pm$\,0.05 \\
 & $1000$  &  {\bf 0.95\,$\pm$\,0.04} & N/A & {\bf 0.76\,$\pm$\,0.13} & N/A (0) & 0.48\,$\pm$\,0.05 \\
 & $3000$  &  {\bf 0.97\,$\pm$\,0.03} & N/A & {\bf 0.87\,$\pm$\,0.11} & N/A (0) & 0.48\,$\pm$\,0.05 \\
 & $10000$ &  {\bf 0.98\,$\pm$\,0.03} & N/A & {\bf 0.88\,$\pm$\,0.15} & N/A (0) & 0.47\,$\pm$\,0.05 \\
 \bottomrule
\end{tabular}
\end{small}
\end{table*}

%% file: sections/4experiments.tex
\section{Experiments}\label{sec:experiments}
We conduct experiments to validate our score-based methods, by comparing them to existing methods that support causally-related latent variables, such as FOFC~\citep{kummerfeld2016causal}, HUANG \citep{huang2022latent}, and GIN \citep{xie2020generalized}. We do not include the comparison with FCI because, even when working perfectly, it will output complete PAGs over the observed variables for most  ground truths considered here, which do not have any information of the orientation and are not informative. Moreover, we denote our exact search method by SALAD and continuous one by SALAD-CS, and adopt the BIC score here.

For the ground truths, we consider the 1-factor models and hierarchical structures provided in \cref{fig:silva_ground_truths,fig:llh_ground_truths} in \cref{app:experiment_details}. For each structure, the nonzero elements of matrices $B$ and $C$ are generated uniformly at random from the interval $[-2,-0.5]\cup[0.5,2.0]$. For GIN, the noise terms $E_X$ and $E_L$ follow $\operatorname{Uniform}[-\alpha,\alpha]$, where $\alpha$ is sampled uniformly from $[\sqrt{6},\sqrt{15}]$. For the other methods, the noise terms follow Gaussian distributions with variances sampled uniformly from interval $[2, 5]$. We consider sample size $T\in\{100,300,1000,3000,10000\}$. We evaluate the estimated structures using F1 scores calculated over the skeleton and structural Hamming distance (SHD) over the MEC. We run three random trials for each ground truth, and report the mean and standard devation for each metric. Further details about the metrics and baselines can be found in \cref{app:experiment_details}.\looseness=-1

The F1 scores of skeletons are reported in \cref{tab:f1_skeletons}, while the SHDs of MECs are given in \cref{tab:shd_pdags} in the supplementary material. One observes that our methods achieve much better F1 scores and SHDs as compared to the other baselines, especially for small sample sizes. For instance, for $100$ samples, our SALAD method achieves average F1 scores of $0.99$ and $0.92$ for 1-factor model and hierarchical structures, respectively, while the second best baseline achieves F1 scores of $0.90$ and $0.57$, respectively. Note that although FOFC achieves an F1 score of $0.90$ for 1-factor model in this case, four of the runs are not valid (i.e., an error occurred). A possible reason of the improvement is that the existing constraint-based baselines, as discussed in \cref{sec:intro}, may be prone to the issue of error propagation during the estimation procedure, while our score-based method is not susceptible to such an issue. Furthermore, the F1 scores of our method are close to one for both structural assumptions when the sample size is large, which suggest that BIC may be a valid scoring function in our setting and help verify the correctness established in \cref{thm:correctness_silva_exact,thm:correctness_llh_exact}. The runtime and computational efficiency are discussed in \cref{app:experiment_results}.

%% file: sections/5conclusion.tex
\section{Conclusion and Discussion}
In this work, we propose SALAD, a score-based causal discovery method capable of identifying causal relations among latent variables. Achieving score equivalence and consistency, along with degrees of freedom characterization and exact and continuous score-based methods, our work provides a unified view on multiple existing constraint-based methods with latent variables, and further validates the effectiveness of score-based methods. We hope that this work could spur future research on developing score-based methods for latent variable causal models.

Indeed, our exact methods require a relatively long runtime, similar to the exact score-based methods even without latent variables \citep{singh2005finding,yuan2013learning}. Future works include developing greedy approaches similar to GES to make the search procedure more efficient and scalable, and studying the theoretical justifications of using BIC score under the structural assumptions considered.

%% file: sections/acknowledgements.tex
\section*{Acknowledgements}

The authors would like to thank the anonymous reviewers for helpful comments and suggestions. The authors would also like to acknowledge the support from NSF Grant 2229881, the National Institutes of Health (NIH) under Contract R01HL159805, and grants from Apple Inc., KDDI Research Inc., Quris AI, and Florin Court Capital.

%% file: sections/impact_statement.tex
\section*{Impact Statement}
This paper presents work whose goal is to advance the field of Machine Learning. There are many potential societal consequences of our work, none which we feel must be specifically highlighted here.

%% file: sections/6appendices.tex
\section{Further Discussions}
We provide supplementary discussions below as complements to various sections in the main paper.
\subsection{Latent Variable Causal Models}\label{app:latent_variable_models}
In real-world scenarios, one may often encounter the latent variable causal model in \cref{eq:linear_sem}, where the measured variables do not influence each other and are effects of latent variables. Thus, there have been many works that aim to estimate this type of linear latent variable causal models~\citep{silva2003learning,silva2006learning,silva2005generalized,zhang2004hierarchical,choi2011learning,kummerfeld2016causal,cai2019triad,xie2020generalized,dai2022independence,huang2022latent,chen2022identification}.\looseness=-1

To provide some examples, in psychometrics, multiple questions are often used as indirect proxies for each latent personality dimension (e.g., openness, extraversion, self-esteem) \citep{lewis1992development,byrne2001structural,himi2019multitasking}, forming a latent variable causal model. When analyzing fMRI data, a large number of voxels are measured, which do not necessarily have clear semantic meanings. A hierarchical structure can then be used to model functionally meaningful brain regions at different levels \citep{huang2022latent}. In representation learning, recent works typically assumed that measured variables (e.g., image pixels) are effects of latent variables and that there are no direct causal influences among the measured variables~\citep{scholkopf2021causal,hyvarinen2023identifiability,zhang2024causal}.

\subsection{Generalized Faithfulness Assumption}\label{app:generalized_faithfulness}
We discuss the necessity of the generalized faithfulness assumption adopted in our results. One of the advantages of score-based causal discovery is that it typically relies on the sparsest Markov representation (SMR) assumption (or unique frugality assumption) \citep{forster2017frugal,raskutti2014learning}, which is strictly weaker than the faithfulness assumption~\citep{spirtes2001causation} in the setting without latent variables. In our setting with latent variables, it is possible to modify Theorems~\ref{thm:equivalence_equality_constraints}, \ref{thm:correctness_silva_exact}, and \ref{thm:correctness_llh_exact} to replace generalized faithfulness with a formulation similar to the SMR assumption. However, doing so may not be informative because (1) SMR may not be strictly weaker than the faithfulness assumption in our setting, and (2) simply assuming SMR in our setting may not provide insights into what structural assumptions the true structure should obey. Thus, we adopt the generalized faithfulness assumption and various structural assumptions to make the results more informative.

\subsection{Algebraic Equivalence}\label{app:algebraic_equivalence}
We provide a further discussion of algebraic equivalence~\citep{ommen2017algebraic} as a complement to~\cref{sec:score_identification}. First note that two algebraic equivalent structures are not necessarily Markov equivalent. The reason is that, without any restriction on the structures, one may construct different structures that entail the same equality constraints. For instance, consider the structure in \cref{fig:llh_ground_truths}(a), denoted as $\mathcal{G}_1$, and another structure $\mathcal{G}_2$ that is identical to $\mathcal{G}_1$, except that the edges $L_1\rightarrow X_1$, $L_2\rightarrow X_1$, and $L_1\rightarrow X_2$ are removed in $\mathcal{G}_2$. One can show $\mathcal{G}_1$ and $\mathcal{G}_2$ are algebraic equivalent, but clearly they are not Markov equivalent.

Nonetheless, algebraic equivalence may be a reasonable way for estimating linear latent variable causal models, because equality constraints (of the covariance matrices) are some of the major footprints in the data that one could leverage (without considering higher-order statistics) to identify the underlying structures. This can be done by relating these constraints to the structures via the generalized faithfulness assumption. 

\subsection{Structural Assumptions}\label{app:structural_assumptions}
We discuss the similarities and differences between the structural assumptions considered in our work. First, both \cref{assumption:silva_graphical_criterion,assumption:llh_graphical_criterion} require that the observed variables are leaf nodes, and that there are no direct causal influences among observed variables. The key differences between them are as follows. (1) \cref{assumption:silva_graphical_criterion} requires that each latent variable has at least three measured variables as its children, while \cref{assumption:llh_graphical_criterion} allows latent variables to form a hierarchical structure - some latent variables may only have latent variables as their children. (2) \cref{assumption:silva_graphical_criterion} requires each observed variable to be caused by a single latent variable, while \cref{assumption:llh_graphical_criterion} allows an observed variable to be caused by a group of latent variables. Since \cref{assumption:llh_graphical_criterion} does not require each latent variable to have measured variables as children, the structure among latent variables cannot be arbitrary, and thus there is a tradeoff between \cref{assumption:silva_graphical_criterion,assumption:llh_graphical_criterion}.

\section{Proofs}
\subsection{Proof of \cref{lemma:indeterminacy_Omega_L}}\label{app:proof_indeterminacy_omegaL}
\IndeterminacyOfOmegaL*
\vspace{-0.8em}
\begin{proof}
Let $\tilde{B}\coloneqq B\Omega_L^{\frac{1}{2}}$ and $\tilde{C}\coloneqq \Omega_L^{-\frac{1}{2}}C\Omega_L^{\frac{1}{2}}$. We have $\supp(B)=\supp(\tilde{B})$, $\supp(C)=\supp(\tilde{C})$, and
\begin{align*}
\Sigma_{X}&=B(I-C)^{-1}\Omega_L(I-C)^{-\top} B^{\top}+\Omega_X\\
&=B\Omega_L^{\frac{1}{2}} \Omega_L^{-\frac{1}{2}}(I-C)^{-1}\Omega_L^{\frac{1}{2}}\Omega_L^{\frac{1}{2}}(I-C)^{-\top} \Omega_L^{-\frac{1}{2}} \Omega_L^{\frac{1}{2}}B^{\top}+\Omega_X\\
&=(B\Omega_L^{\frac{1}{2}}) (I-\Omega_L^{-\frac{1}{2}}C\Omega_L^{\frac{1}{2}})^{-1}(I-\Omega_L^{-\frac{1}{2}}C\Omega_L^{\frac{1}{2}})^{-\top}(B\Omega_L^{\frac{1}{2}})^{\top}+\Omega_X\\
&=\tilde{B}(I-\tilde{C})^{-1}(I-\tilde{C})^{-\top} \tilde{B}^{\top}+\Omega_X\qedhere
\end{align*}
\end{proof}

\subsection{Proof of \cref{proposition:score_equivalence}}\label{app:proof_proposition_score_equivalence}
The following proof is partly inspired by that of the score equivalence property in the setting without latent variables \citep{koller2009probabilistic}.

\PropositionScoreEquivalence*
\begin{proof}
Because structures $\mathcal{G}_1$ and $\mathcal{G}_2$ are Markov equivalent, they can generate the same set of covariance matrices over variables $X$ and $L$. Thus, for any parameters $B,C,\Omega_X$ of $\mathcal{G}_1$ with $\Omega_L=I$, there exists parameters $B',C',\Omega_X',\Omega_L'$ of $\mathcal{G}_2$ that can generate the same covariance matrix over $X$ and $L$, which imply that $B',C',\Omega_X',\Omega_L'$ can generate the same covariance matrix over $X$. By \cref{lemma:indeterminacy_Omega_L}, there exists parameters of $\mathcal{G}_2$, denoted as $\tilde{B},\tilde{C},\tilde{\Omega}_X=\Omega_X'$  and $\tilde{\Omega}_L=I$, that can generate the covariance matrix. Note that the likelihood function depends only on the covariance matrix, which indicates
\[
\score_{\mathcal{L}}(\mathcal{G}_1,\mathbf{D})=\mathcal{L}(\hat{B},\hat{C},\hat{\Omega}_X;\mathbf{D})=\mathcal{L}(\tilde{B},\tilde{C},\tilde{\Omega}_X;\mathbf{D})\geq \score_{\mathcal{L}}(\mathcal{G}_2,\mathbf{D}),
\]
where $\hat{B},\hat{C},\hat{\Omega}_X$ are the solutions of the optimization problem in \cref{eq:score_likelihood} for $\mathcal{G}=\mathcal{G}_1$, and, as described above, $\tilde{B},\tilde{C},\tilde{\Omega}_X$ are the corresponding parameters of structure $\mathcal{G}_2$.

Similarly, the same reasoning implies $\score_{\mathcal{L}}(\mathcal{G}_2,\mathbf{D})\geq \score_{\mathcal{L}}(\mathcal{G}_1,\mathbf{D})$. Combining both cases, we have $\score_{\mathcal{L}}(\mathcal{G}_1,\mathbf{D})= \score_{\mathcal{L}}(\mathcal{G}_2,\mathbf{D})$. Furthermore, since $\mathcal{G}_1$ and $\mathcal{G}_2$ can generate the same set of covariance matrices over variables $X$ and $L$, they  can generate the same set of covariance matrices over variables $X$. This implies $\dim(\mathcal{G}_1)=\dim(\mathcal{G}_2)$. Therefore, we have $\score_{\textrm{BIC}}(\mathcal{G}_1,\mathbf{D})=\score_{\textrm{BIC}}(\mathcal{G}_2,\mathbf{D})$ and $\score_{\textrm{dim}}(\mathcal{G}_1,\mathbf{D})=\score_{\textrm{dim}}(\mathcal{G}_2,\mathbf{D})$.
\end{proof}

\subsection{Proof of \cref{thm:equivalence_equality_constraints}}\label{app:thm_equivalence_equality_constraints}
The overall proof strategy below is partly inspired by the proof of \citet[Theorem~3]{ghassami2020characterizing}.
\TheoremEquivalenceEqualityConstraints*
\vspace{-0.3em}
\begin{proof}
Since the search space contains the true DAG $\mathcal{G}^*$ that can generate $\Sigma_{X}$ in the large sample limit, the estimated DAG $\hat{\mathcal{G}}$ can also generate $\Sigma_{X}$, because otherwise its score will be infinity and will not be a solution of the optimization problem. Therefore, $\Sigma_X$ belongs to the distribution set of $\hat{\mathcal{G}}$, i.e., $\Sigma_X\in\mathcal{M}(\hat{\mathcal{G}})$, which implies that $\Sigma_X$ contains all the equality and inequality constraints of $\hat{\mathcal{G}}$. Under the generalized faithfulness assumption, we have 
\begin{equation}\label{eq:equalities_subset}
H(\hat{\mathcal{G}})\subseteq H(\mathcal{G}^*).
\end{equation}
Now suppose by contradiction that $H(\hat{\mathcal{G}})\subsetneq H(\mathcal{G}^*)$. This implies $\dim(\hat{\mathcal{G}})>\dim(\mathcal{G}^*)$, which is a contradiction because the objective function implies $\dim(\hat{\mathcal{G}})\leq \dim(\mathcal{G}^*)$. Thus, we obtain 
\begin{equation}\label{eq:equalities_not_subset}
H(\hat{\mathcal{G}})\not\subsetneq H(\mathcal{G}^*).
\end{equation}
By \cref{eq:equalities_subset,eq:equalities_not_subset}, we have $H(\hat{\mathcal{G}})= H(\mathcal{G}^*)$.
\end{proof}

\subsection{Proof of \cref{proposition:dimension_silva}}\label{app:proposition_dimension_silva}
We first state the following lemma adapted from \citet{leung2015identifiability} that relates the parameter identifiability from a given structure to the underlying degrees of freedom.
\begin{lemma}[{{\citet{leung2015identifiability}}}]\label{lemma:parameter_and_jacobian}
Suppose $f:\set{S}\rightarrow \mathbb{R}^d$ is a polynomial map defined on an open set $\set{S}\subseteq\mathbb{R}^p$. The following statements are equivalent:
\begin{enumerate}[label=(\roman*)]
\item $f$ is generically finite-to-one.
\item The Jacobian matrix of $f$ is generically of full column rank.
\end{enumerate}
\end{lemma}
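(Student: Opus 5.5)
The plan is to treat $f$ through the generic rank of its Jacobian $J_f$. Since $f$ is polynomial, the entries of $J_f$ are polynomials. Let $r$ be the maximal rank of $J_f$ over $\set{S}$. The set where $\operatorname{rank} J_f(\theta)<r$ is the common zero set of all $r\times r$ minors, which are polynomials and not all identically zero. Hence this set is a proper algebraic subset of $\set{S}$ and has Lebesgue measure zero. Thus ``generically of full column rank'' simply means $r=p$, and on the open dense set $\set{S}_r\coloneqq\{\theta:\operatorname{rank}J_f(\theta)=r\}$ the rank is constant. Both directions then follow from the constant rank theorem applied on $\set{S}_r$, together with semialgebraic finiteness.

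For (ii)$\Rightarrow$(i), assume $r=p$. By the inverse function theorem, every point of $\set{S}_p$ is an isolated point of its fiber $f^{-1}(f(\theta))\cap\set{S}$. The fiber is a semialgebraic set, so it has finitely many connected components. An isolated point forms a component by itself, but the fiber may also contain points of the degenerate set $\set{Z}=\set{S}\setminus\set{S}_p$. To handle this, consider $f(\set{Z})$. It is semialgebraic, and $\set{Z}$ has dimension $<p$. By Sard's theorem, or by the semialgebraic dimension inequality $\dim f(\set{Z})\le\dim\set{Z}<p$, the points $\theta$ with $f(\theta)\in f(\set{Z})$ form a set $f^{-1}(f(\set{Z}))$ of measure zero in $\set{S}$. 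Here I use that $f$ is a local diffeomorphism onto its image near points of $\set{S}_p$, so preimages of lower-dimensional sets are lower-dimensional there. For $\theta$ outside $\set{Z}\cup f^{-1}(f(\set{Z}))$, the whole fiber lies in $\set{S}_p$. The fiber therefore consists of isolated points and has finitely many components, so it is finite. This gives generic finite-to-one.

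For (i)$\Rightarrow$(ii), I argue by contraposition: suppose $r<p$. The constant rank theorem applies at every $\theta\in\set{S}_r$, which is open and dense, hence of positive measure. It gives local coordinates in which $f$ is a linear projection, so $f^{-1}(f(\theta))$ contains a submanifold of dimension $p-r\ge1$ through $\theta$. Such a fiber is infinite, so $f$ is infinite-to-one on a set of positive measure and is not generically finite-to-one.

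I expect the main obstacle to be the (ii)$\Rightarrow$(i) direction, specifically excluding the degenerate points from generic fibers. Local injectivity alone does not bound the global fiber size. The step I would make precise first is the measure-zero claim for $f^{-1}(f(\set{Z}))$ via semialgebraic dimension theory (Tarski--Seidenberg, as already invoked for $\mathcal{M}(\mathcal{G})$). I would then combine it with the finiteness of connected components of semialgebraic sets.
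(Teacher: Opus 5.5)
The paper gives no proof of this lemma. It is quoted from \citet{leung2015identifiability} and used only in the direction (i)$\Rightarrow$(ii), in the proof of \cref{proposition:dimension_silva}. There, identifiability of $(B,C,\Omega_X)$ up to finitely many signed permutations becomes generic full column rank of the Jacobian, and hence $\dim(\mathcal{G})=|\mathcal{G}|+m$. Your argument is therefore a self-contained substitute rather than an alternative to anything in the text, and it is essentially correct. The half the paper needs is the easy half of your write-up. $\set{S}_r$ is open (lower semicontinuity of rank and maximality of $r$) and of full measure (some $r\times r$ minor is a nonzero polynomial). The constant rank theorem then puts a $(p-r)$-dimensional submanifold into every fiber through $\set{S}_r$. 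For (ii)$\Rightarrow$(i) you correctly isolate the real difficulty, namely rank-deficient points sitting in otherwise generic fibers, and your remedy is the right one.

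Two steps need repair.

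\emph{Semialgebraicity.} $\set{S}$ is an arbitrary open set, so $f^{-1}(f(\theta))\cap\set{S}$ and $f(\set{Z})$ need not be semialgebraic, and $f^{-1}(f(\theta))\cap\set{S}$ can in general have infinitely many components. The fix is to use the polynomial extension of $f$ to $\mathbb{R}^p$:
\begin{itemize}
\item Replace $\set{Z}$ by the proper algebraic set $\set{A}=\{\theta\in\mathbb{R}^p:\text{all } p\times p\text{ minors of } J_f(\theta)\text{ vanish}\}$.
\item A fiber point in $\set{S}\setminus\set{A}$ is an isolated point of the real algebraic set $V=f^{-1}(f(\theta))\subseteq\mathbb{R}^p$, because $\set{S}$ is open. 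It is therefore a connected component of $V$.
\item Finiteness then follows from Whitney's theorem that $V$ has finitely many components.
\end{itemize}

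\emph{Sard's theorem.} Drop this appeal. When $d>p$, which is the typical case in the application (there $d=m(m+1)/2$), every point is critical in Sard's sense. Nullity of $f(\set{Z})$ in $\mathbb{R}^d$ then says nothing about $f^{-1}(f(\set{Z}))$. The semialgebraic route does close the gap. $N=f^{-1}(f(\set{A}))$ is semialgebraic. If $\dim N=p$, it would contain an open ball $U\subseteq\mathbb{R}^p\setminus\set{A}$ on which $f$ is an immersion, forcing $\dim f(U)=p$. But $f(U)\subseteq f(\set{A})$ and $\dim f(\set{A})\le\dim\set{A}<p$, a contradiction. Hence $N$ is null, and for $\theta\in\set{S}\setminus N$ the entire fiber avoids $\set{A}$, which is exactly what your finiteness argument needs. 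Since the resulting exceptional set has dimension $<p$, the conclusion holds whether ``generic'' means off a null set or off a proper algebraic subset.
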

We now provide the proof of the following proposition.
\PropositionDimensionSilva*
\begin{proof}
By~\cref{corollary:indeterminacy_Omega_L}, it suffices to consider the case where $\Omega_L=I$. Since the structure $\mathcal{G}$ satisfies \cref{assumption:silva_graphical_criterion}, by \citet{bollen1989general}, the corresponding parameters $B,C$ and $\Omega_X$ of $\mathcal{G}$ are identifiable from $\Sigma_X$ up to certain indeterminacy. Specifically, $B$ is identifiable up to column permutations and sign changes, $C$ is identifiable up to equal row and column permutations, and $\Omega_X$ is identifiable. Therefore, the map from $B,C$, and $\Omega_X$ to $\Sigma_X$ is generically finite-to-one.

The map from $B,C$, and $\Omega_X$ (with $\Omega_L=I$) to $\Sigma_X$ is a polynomial map. By \cref{lemma:parameter_and_jacobian}, the Jacobian matrix of such map is generically of full column rank. Note that  the degrees of freedom (or dimension) of a polynomial map are equal to the maximal rank of the corresponding Jacobian matrix \citep[Theorem~10]{geiger2001stratified}. Therefore, the degrees of freedom are equal to the number of parameters in $B,C$, and $\Omega_X$, i.e., $\operatorname{dim}(\mathcal{G})=\|B_\mathcal{G}\|_0+\|C_\mathcal{G}\|_0+m=|\mathcal{G}|+m$.
\end{proof}

\subsection{Proof of \cref{thm:correctness_silva_exact}}\label{app:thm_correctness_silva_exact}
\TheoremCorrectnessSilvaExact*
\begin{proof}
Since the search space contains the true DAG $\mathcal{G}^*$ that can generate $\Sigma_{X}$ in the large sample limit, the estimated DAG $\hat{\mathcal{G}}$ can also generate $\Sigma_{X}$, because otherwise its score will be infinity and will not be a solution of the optimization problem. Because $\mathcal{G}^*$ and $\Sigma_X$ satisfy the generalized faithfulness assumption, we have $H(\hat{\mathcal{G}})=H(\mathcal{G}^*)$ in the large sample limit by \cref{proposition:dimension_silva} and restricting the set of structures to those satisfying \cref{assumption:silva_graphical_criterion} in \cref{thm:equivalence_equality_constraints}. This indicates that $\hat{\mathcal{G}}$ and $\Sigma_X$ also satisfy the generalized faithfulness assumption.

Since $\hat{\mathcal{G}}$ and $\mathcal{G}^*$ satisfy \cref{assumption:silva_graphical_criterion} and both can faithfully generate the covariance matrix $\Sigma_X$, we have:
\begin{itemize}
\item By \citet[Corollary~1]{silva2003learning}, the measurement models of $\hat{\mathcal{G}}$ and $\mathcal{G}^*$ are identical (up to relabeling of latent variables). In other words, the columns of $B_{\hat{\mathcal{G}}}$ are a permutation of the columns of $B_{\mathcal{G}^*}$. 
\item With a correct measurement model, by leveraging the transitivity of Markov equivalence, it follows straightforwardly from \citet[Theorems~20]{silva2006learning} that the structural models (i.e., the subgraphs over all and only the latent variables) of $\hat{\mathcal{G}}$ and $\mathcal{G}^*$ are Markov equivalent (up to relabeling of latent variables).
\end{itemize}
Combining the reasoning for both measurement model and structural model, $\hat{\mathcal{G}}$ and $\mathcal{G}^*$ are Markov equivalent (up to relabeling of latent variables).
\end{proof}

\subsection{Proof of \cref{proposition:reduced_dimension}}\label{app:proposition_reduced_dimension}
For structure $\mathcal{G}$, we define the following distribution set with the constraint $\Omega_L=I$:
\[
\mathcal{M}(\mathcal{G};\Omega_L=I)\coloneqq\{B(I-C)^{-1}(I-C)^{-\top} B^{\top}+\Omega_X :\supp(B)\subseteq\supp(B_\mathcal{G}),\supp(C)\subseteq\supp(C_\mathcal{G}),\Omega_X\in\diag(\mathbb{R}_{> 0}^m)\}.
\]
Note that the dimension of the domain and the image space are upper bounds for the dimension of $\mathcal{M}(\mathcal{G};\Omega_L=I)$. By \cref{lemma:indeterminacy_Omega_L}, it is straightforward to obtain the following corollary.
\begin{corollary}\label{corollary:indeterminacy_Omega_L}
For any structure $\mathcal{G}$ satisfying \cref{eq:linear_sem}, we have
\[\mathcal{M}(\mathcal{G})=\mathcal{M}(\mathcal{G};\Omega_L=I) \text{\quad and\quad } \dim(\mathcal{G})\leq \min\left(|\mathcal{G}|+m,\frac{1}{2}m(m+1)\right).
\]
\end{corollary}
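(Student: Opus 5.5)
The corollary has two parts, and I will prove them separately: first the set equality, then the dimension bound.

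For the equality $\mathcal{M}(\mathcal{G})=\mathcal{M}(\mathcal{G};\Omega_L=I)$, I will show two inclusions.
\begin{itemize}
\item The inclusion $\mathcal{M}(\mathcal{G};\Omega_L=I)\subseteq\mathcal{M}(\mathcal{G})$ is immediate. The identity matrix lies in $\diag(\mathbb{R}_{>0}^n)$, so every element of the left set is obtained from an admissible parameter choice in the definition of $\mathcal{M}(\mathcal{G})$.
\item For the reverse inclusion, take any $\Sigma_X\in\mathcal{M}(\mathcal{G})$, generated by $(B,C,\Omega_X,\Omega_L)$ with $\supp(B)\subseteq\supp(B_\mathcal{G})$ and $\supp(C)\subseteq\supp(C_\mathcal{G})$. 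Apply \cref{lemma:indeterminacy_Omega_L}, concretely via $\tilde B=B\Omega_L^{1/2}$ and $\tilde C=\Omega_L^{-1/2}C\Omega_L^{1/2}$.
\item These satisfy $\supp(\tilde B)=\supp(B)\subseteq\supp(B_\mathcal{G})$ and $\supp(\tilde C)=\supp(C)\subseteq\supp(C_\mathcal{G})$, because conjugation and scaling by a positive diagonal matrix preserve the zero pattern.
\item Also, $\tilde C$ is still strictly upper triangular, and $\Omega_X$ is unchanged. Hence $\Sigma_X\in\mathcal{M}(\mathcal{G};\Omega_L=I)$.
\end{itemize}

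For the dimension bound, I use the equality just proved: $\dim(\mathcal{G})$ equals the dimension of $\mathcal{M}(\mathcal{G};\Omega_L=I)$.
\begin{itemize}
\item This set is the image of a polynomial map, namely \cref{eq:measured_covariance_matrix} with $\Omega_L=I$, using $(I-C)^{-1}=\sum_k C^k$.
\item The domain of this map is an open subset of the coordinates consisting of the free entries of $B$ (there are $\|B_\mathcal{G}\|_0$ of them), the free entries of $C$ (there are $\|C_\mathcal{G}\|_0$), and the $m$ diagonal entries of $\Omega_X$. The domain therefore has dimension $|\mathcal{G}|+m$.
\item The dimension of the image of a polynomial map equals the generic rank of its Jacobian \citep[Theorem~10]{geiger2001stratified}. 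That rank is at most the number of columns, $|\mathcal{G}|+m$, and at most the number of rows.
\item The codomain is the space of symmetric $m\times m$ matrices, which has dimension $m(m+1)/2$, so this is the bound on the number of rows.
\end{itemize}
Taking the minimum of the two bounds gives $\dim(\mathcal{G})\leq\min\left(|\mathcal{G}|+m,\frac{1}{2}m(m+1)\right)$.

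The only delicate point is identifying $\dim(\mathcal{G})$ with the Jacobian rank of the map restricted to $\Omega_L=I$. Without the first part, one would instead count the $n$ extra parameters of $\Omega_L$ and get a weaker bound, so the equality of sets is what makes the sharper count valid. Everything else is routine.
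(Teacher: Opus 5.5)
Your proof is correct and follows the paper's route. The paper simply notes that the set equality follows from \cref{lemma:indeterminacy_Omega_L}, and that the dimensions of the parameter domain ($|\mathcal{G}|+m$ once $\Omega_L=I$ is fixed) and of the space of symmetric $m\times m$ matrices bound $\dim(\mathcal{G})$. You spell out both inclusions and the support preservation, and you phrase the bound through the Jacobian rank of \citet[Theorem~10]{geiger2001stratified}, which is the same argument in more detail.
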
 
Therefore, it suffices to analyze the degrees of freedom for $\mathcal{M}(\mathcal{G};\Omega_L=I)$ instead of $\mathcal{M}(\mathcal{G})$. 

We first provide the following lemma which shows that an appropriate orthogonal transformation of $B$ and $C$ can generate the same covariance matrix.
\begin{lemma}[Orthogonal transformation]\label{lemma:orthogonal_transformation}
Consider any set of parameters $B, C, \Omega_X$, and $\Sigma_X$ that satisfy
\[
\Sigma_X=B(I-C)^{-1}(I-C)^{-\top} B^{\top}+\Omega_X.
\]
For any orthogonal matrix $Q$, i.e., $QQ^{\top}=I$. the parameters $\tilde{B}=BQ$ and $\tilde{C}=Q^{\top}CQ$ also satisfy
\[
\Sigma_{X}=\tilde{B}(I-\tilde{C})^{-1}(I-\tilde{C})^{-\top} \tilde{B}^{\top}+\Omega_X.
\]
\end{lemma}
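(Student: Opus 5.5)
The plan is a direct substitution followed by a matrix identity for the inverse. Substituting $\tilde{B}=BQ$ and $\tilde{C}=Q^{\top}CQ$ into the right-hand side, it suffices to show
\[
\tilde{B}(I-\tilde{C})^{-1}(I-\tilde{C})^{-\top}\tilde{B}^{\top}=B(I-C)^{-1}(I-C)^{-\top}B^{\top}.
\]
The $\Omega_X$ term is untouched, so the claim follows once this identity holds.

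The key step is to rewrite $I-\tilde{C}$. Since $Q$ is orthogonal, $Q^{\top}Q=QQ^{\top}=I$, and therefore
\[
I-\tilde{C}=Q^{\top}Q-Q^{\top}CQ=Q^{\top}(I-C)Q.
\]
Because $I-C$ is invertible (it is unit upper triangular) and $Q$ is invertible, $I-\tilde{C}$ is invertible. Its inverse is $Q^{-1}(I-C)^{-1}Q^{-\top}=Q^{\top}(I-C)^{-1}Q$, and transposing gives $(I-\tilde{C})^{-\top}=Q^{\top}(I-C)^{-\top}Q$.

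Multiplying these out, the inner factors $QQ^{\top}$ cancel:
\[
BQ\,Q^{\top}(I-C)^{-1}Q\,Q^{\top}(I-C)^{-\top}Q\,Q^{\top}B^{\top}=B(I-C)^{-1}(I-C)^{-\top}B^{\top},
\]
which is the required identity. No step here is a real obstacle; the only point needing care is invertibility of $I-\tilde{C}$.

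One caveat: $\tilde{C}$ need not be strictly upper triangular or acyclic, and $\supp(\tilde B)$ may differ from $\supp(B)$. The lemma claims only the covariance identity, so I will not address support. Those issues matter only when the lemma is later applied in \cref{proposition:reduced_dimension}, where $Q$ is chosen to act within the block of latent variables sharing the same parents and children.
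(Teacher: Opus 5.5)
Your proof is correct and takes essentially the same route as the paper: both insert $QQ^{\top}=I$ and use $(I-Q^{\top}CQ)^{-1}=Q^{\top}(I-C)^{-1}Q$, so that the orthogonal factors cancel. Your factorization $I-\tilde{C}=Q^{\top}(I-C)Q$ simply makes explicit the inversion step that the paper leaves implicit.
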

\begin{proof}
The proof follows from straightforward algebraic manipulations:
\begin{align*}
\Sigma_{X}&=B(I-C)^{-1}(I-C)^{-\top} B^{\top}+\Omega_X\\
&=BQ Q^{\top}(I-C)^{-1}QQ^{\top}(I-C)^{-\top} Q Q^{\top} B^{\top}+\Omega_X\\
&=(BQ) (I-Q^{\top}CQ)^{-1}(I-Q^{\top}CQ)^{-\top}(BQ)^{\top}+\Omega_X\\
&=\tilde{B}(I-\tilde{C})^{-1}(I-\tilde{C})^{-\top} \tilde{B}^{\top}+\Omega_X.\qedhere
\end{align*}
\end{proof}
The following result shows that, in specific cases, some of the edges can be removed from the structure while still leading to the same distribution set.
\begin{lemma}\label{lemma:removing_edges}
Suppose that DAG $\mathcal{G}$ follows the linear latent variable causal model in \cref{eq:linear_sem}. Suppose also that there exist $k\geq 2$ latent variables $\set{L}$ in $\mathcal{G}$ with the same set of parents and children, where the number of children (parents) is at least $k$. Then, there exists a structure, denoted by $\tilde{\mathcal{G}}$, such that: (1) $\tilde{\mathcal{G}}$ is identical to $\mathcal{G}$, except that $k(k-1)/2$ edges among those latent variables $\set{L}$ and their children (parents) are removed in $\tilde{\mathcal{G}}$, and (2)~$\mathcal{M}(\tilde{\mathcal{G}})=\mathcal{M}(\mathcal{G})$.
\end{lemma}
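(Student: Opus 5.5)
By \cref{corollary:indeterminacy_Omega_L}, we have $\mathcal{M}(\mathcal{G})=\mathcal{M}(\mathcal{G};\Omega_L=I)$, and likewise for any subgraph. So I will work throughout with $\Omega_L=I$. The plan is to apply \cref{lemma:orthogonal_transformation} with an orthogonal matrix $Q$ that acts only on the coordinates of $\set{L}$. This $Q$ will be chosen so that a fixed triangular pattern of $k(k-1)/2$ coefficients becomes zero.

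The inclusion $\mathcal{M}(\tilde{\mathcal{G}})\subseteq\mathcal{M}(\mathcal{G})$ is immediate for any subgraph $\tilde{\mathcal{G}}$ of $\mathcal{G}$. The definition of the distribution set only requires $\supp(B)\subseteq\supp(B_{\mathcal{G}})$ and $\supp(C)\subseteq\supp(C_{\mathcal{G}})$, so every parameterization of $\tilde{\mathcal{G}}$ is also a parameterization of $\mathcal{G}$. The work is in the reverse inclusion.

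Structural observations. Since all variables in $\set{L}$ have the same parent set, no two of them can be adjacent: an edge $L_i\to L_j$ would make $L_i$ a parent of $L_j$ but not of itself. Hence $C_{\set{L},\set{L}}=0$. The common parents precede all of $\set{L}$ and the common children follow all of $\set{L}$ in any topological order. I can therefore relabel so that $\set{L}$ occupies consecutive indices in a topological order, with $C$ strictly upper triangular.

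The transformation. Let $Q=\diag(I,Q_0,I)$, where $Q_0\in\mathbb{R}^{k\times k}$ is orthogonal and sits on the $\set{L}$ block. Set $\tilde{B}=BQ$ and $\tilde{C}=Q^\top C Q$.
\begin{itemize}
\item Right multiplication by $Q$ only mixes the columns indexed by $\set{L}$. These columns of $B$ and of $C$ all share the common children support, so the mixed columns keep the same support (or smaller).
\item Left multiplication by $Q^\top$ only mixes the rows indexed by $\set{L}$. These rows of $C$ share the common parent support.
\item The block $C_{\set{L},\set{L}}$ stays zero, so $\tilde{C}$ remains strictly upper triangular.
\end{itemize}
By \cref{lemma:orthogonal_transformation}, $(\tilde{B},\tilde{C},\Omega_X)$ generates the same $\Sigma_X$ and respects $\supp(\mathcal{G})$.

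Choosing $Q_0$ in the children case. Fix $k$ common children $V_1,\dots,V_k$ (measured or latent). Let $A\in\mathbb{R}^{k\times k}$ be the matrix of coefficients from $\set{L}$ to these children, i.e.\ the corresponding rows of $[B;C]$ restricted to the columns $\set{L}$. Take an LQ decomposition $A=RQ_0^\top$ with $R$ lower triangular and $Q_0$ orthogonal; this exists for every $A$, singular or not. Then $AQ_0=R$ is lower triangular, so the coefficients of $L_j\to V_i$ with $j>i$ vanish. Removing exactly these $k(k-1)/2$ edges defines $\tilde{\mathcal{G}}$, and the transformed parameters lie in its parameter space.

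Choosing $Q_0$ in the parents case. Take the block $A'=C_{\set{L},\set{P}}$ for $k$ common parents $\set{P}$. Use the QR decomposition $A'=Q_0R$, so that $Q_0^\top A'=R$ is upper triangular, and remove the corresponding $k(k-1)/2$ parent edges.

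Main obstacle. The key point to verify is that $\tilde{\mathcal{G}}$ is one fixed graph, independent of the parameters. This holds because the triangular zero pattern is always the same, regardless of $(B,C,\Omega_X)$. Hence every element of $\mathcal{M}(\mathcal{G})$ lies in $\mathcal{M}(\tilde{\mathcal{G}})$, which gives the reverse inclusion and completes the equality $\mathcal{M}(\tilde{\mathcal{G}})=\mathcal{M}(\mathcal{G})$.
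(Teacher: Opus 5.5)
Your proposal is correct and follows essentially the same route as the paper: a block-orthogonal change of basis on the coordinates of $\set{L}$ via \cref{lemma:orthogonal_transformation}, with the orthogonal block chosen by an LQ/QR decomposition of the coefficients to the common children (or from the common parents). This makes a fixed triangular pattern of $k(k-1)/2$ entries vanish for every parameterization, so a single $\tilde{\mathcal{G}}$ works. The differences are cosmetic: you decompose a $k\times k$ submatrix rather than the paper's full stacked matrix $D$, and you state explicitly that $C_{\set{L},\set{L}}=0$ and that $\mathcal{M}(\tilde{\mathcal{G}})\subseteq\mathcal{M}(\mathcal{G})$, both of which the paper uses without stating them.
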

\begin{proof}
Consider any set of parameters $B, C, \Omega_X$, and $\Sigma_X$ that satisfy
\begin{equation}\label{eq:proof_parameterization}
\Sigma_X=B(I-C)^{-1}(I-C)^{-\top} B^{\top}+\Omega_X.
\end{equation}
We first consider the case where the number of children is at least $k$. Denote by $\set{S}$ the set of indices of the latent variables in $\set{L}$. Since the latent variables $\set{L}$ have the same set of children, the rows that correspond to the nonzero entries in each column of $B_{:, \set{S}}$ and $C_{:, \set{S}}$ are the same, which we denote by $\set{R}_1$ and $\set{R}_2$ for $B_{:, \set{S}}$ and $C_{:, \set{S}}$, respectively. Let $D=(B_{\set{R}_1, \set{S}},C_{\set{R}_2, \set{S}})$ be a matrix by concatenating the rows of $B_{\set{R}_1, \set{S}}$ and $C_{\set{R}_2, \set{S}}$; that is, $D$ is a matrix of dimension $(|\set{R}_1|+|\set{R}_2|)\times k$, where $|\set{R}_1|+|\set{R}_2|\geq k$. Applying orthogonal transformation as in the QR-decomposition, $D$ can be written as $D=\tilde{D}Q$, where $\tilde{D}$ is a lower-triangular matrix and $Q$ is an orthogonal matrix. We rewrite the equation as $\tilde{D}=DQ^{-1}$ where $Q^{-1}$ is also an orthogonal matrix.

Consider the reversed mapping of the indices $\set{A}_1\coloneqq\{1,2,\dots,|\set{R}_1|\}$ and $\set{A}_2\coloneqq\{|\set{R}_1|+1,|\set{R}_1|+2,\dots,|\set{R}_1|+|\set{R}_2|\}$. We now construct an $n\times n$ orthogonal matrix $U$ as follows: (1) $U_{\set{S},\set{S}}=Q^{-1}$, (2) the other non-diagonal entries are zero, and (3) the other diagonal entries are one. Let $\tilde{B}=BU$ and $\tilde{C}=U^{\top}CU$. Clearly, the entries in $\tilde{B}$ are the same as $B$, except that $B_{\set{R}_1, \set{S}}$ is replaced with $B_{\set{R}_1, \set{S}}U_{\set{S},\set{S}}=B_{\set{R}_1, \set{S}}Q^{-1}=(DQ^{-1})_{\set{A}_1,:}=\tilde{D}_{\set{A}_1,:}$. Similarly, the entries in $\tilde{C}$ are the same as $C$, except that (1) $C_{\set{R}_2, \set{S}}$ is replaced with $C_{\set{R}_2, \set{S}}U_{\set{S},\set{S}}=C_{\set{R}_2, \set{S}}Q^{-1}=(DQ^{-1})_{\set{A}_2,:}=\tilde{D}_{\set{A}_2,:}$, and (2) $C_{\set{S}, :}$ is replaced with $U_{\set{S},\set{S}}^{\top}C_{\set{S}, :}$. Since the latent variables $\set{L}$ have the same set of parents, we have $\supp(U_{\set{S},\set{S}}^{\top}C_{\set{S}, :})\subseteq \supp(C_{\set{S}, :})$. This implies that $\tilde{B},\tilde{C}$, and $\Omega_X$ are parameterization of, e.g., structure $\tilde{\mathcal{G}}$, where $\tilde{\mathcal{G}}$ has the same edges as $\mathcal{G}$, except that $k(k-1)/2$ of the edges from $\mathcal{G}$ are removed in $\tilde{\mathcal{G}}$ (which correspond to the ``non-lower-triangular'' entries from $D$ that become zero in $\tilde{D}$ after QR-decomposition). Clearly, $\tilde{\mathcal{G}}$ is identical to $\mathcal{G}$, except that $k(k-1)/2$ edges among those latent variables $\set{L}$ and their children are removed in $\tilde{\mathcal{G}}$. Furthermore, by \cref{lemma:orthogonal_transformation}, the parameters $\tilde{B}$, $\tilde{C}$, and $\Omega_X$  can generate the same covariance matrix $\Sigma_X$.

Since we are able to construct the same structure $\tilde{\mathcal{G}}$ using the above procedure for every parameterization $B$, $C$, and $\Omega_X$ of $\mathcal{G}$ in \cref{eq:proof_parameterization}, we have $\mathcal{M}(\tilde{\mathcal{G}};\Omega_L=I)=\mathcal{M}(\mathcal{G};\Omega_L=I)$, which, by \cref{corollary:indeterminacy_Omega_L}, implies $\mathcal{M}(\tilde{\mathcal{G}})=\mathcal{M}(\mathcal{G})$. The same reasoning also applies when the number of parents is at least $k$, i.e., such a structure $\tilde{\mathcal{G}}$ can also be constructed.
\end{proof}

We now provide the proof of the following proposition.
\PropositionReducedDimension*
\begin{proof}
By \cref{lemma:removing_edges}, there exists a structure, denoted by $\tilde{\mathcal{G}}$, such that: (1) $\tilde{\mathcal{G}}$ is identical to $\mathcal{G}$, except that $k(k-1)/2$ edges are removed in $\tilde{\mathcal{G}}$, and (2)~$\mathcal{M}(\tilde{\mathcal{G}})=\mathcal{M}(\mathcal{G})$. By \cref{corollary:indeterminacy_Omega_L}, this implies
\begin{align*}
\dim(\mathcal{G})=\dim(\tilde{\mathcal{G}})\leq |\tilde{\mathcal{G}}|+m=|\mathcal{G}|+m-\frac{1}{2}k(k-1).
\end{align*}
\end{proof}

\subsection{Proof of \cref{proposition:dimension_llh}}\label{app:proposition_dimension_llh}
\PropositionDimensionLLH*
\begin{proof}
Let $\set{L}_1,\set{L}_2,\dots,\set{L}_p$ be the pairwise disjoint sets of latent variables in structure $\mathcal{G}$ such that (1) each $\set{L}_i$ has at least two latent variables, (2) the variables of each $\set{L}_i$ have the same set of parents and children in $\mathcal{G}$, (3) no proper superset of each $\set{L}_i$ has the same set of parents and children $\mathcal{G}$. Since the structure $\mathcal{G}$ is a DAG, we assume without loss of generality that $\set{L}_1,\set{L}_2,\dots,\set{L}_p$ are sorted based on the reversed causal ordering in $\mathcal{G}$. That is, variables $\set{L}_{i_1}$ cannot be the ancestors of variables $\set{L}_{i_2}$ in structure $\mathcal{G}$ for $i_1<i_2$.

Although \cref{alg:count_llh_dimension} does not impose any specific order on the sets of latent variables, we suppose that the algorithm computes the dimension based on $\set{L}_1,\set{L}_2,\dots,\set{L}_p$ (sorted according to reversed causal ordering), which does not affect the computed degrees of freedom. Denote by $d_j$ the output of the algorithm in the $j$-th iteration where $j\in[p]$. The final output is then $d_p$. It suffices to show $\dim(\mathcal{G})\leq d_j$ in each iteration.

We provide a proof by induction. Specifically, we show that, for the $j$-th iteration, there exists a structure $\tilde{\mathcal{G}}_j$ such that:
\begin{enumerate}
\item $\tilde{\mathcal{G}}_j$ is identical to $\mathcal{G}$, except that $|\mathcal{G}|+m-d_j$ edges among the variables $\bigcup_{i=1}^j\set{L}_i$ and their children are removed in $\tilde{\mathcal{G}}_j$.
\item $\mathcal{M}(\tilde{\mathcal{G}}_j)=\mathcal{M}(\mathcal{G})$.
\end{enumerate}
By \cref{corollary:indeterminacy_Omega_L}, this implies the desired outcome
\[
\dim(\mathcal{G})=\dim(\tilde{\mathcal{G}}_j)\leq |\tilde{\mathcal{G}}_j|+m=|\mathcal{G}|+m-(|\mathcal{G}|+m-d_j)=d_j.
\]

For induction, we first consider the base case $j=1$. By assumption, the variables $\set{L}_1$ have the same set of parents and children in $\mathcal{G}$, where, under \cref{assumption:llh_graphical_criterion}, the number of children is at least $|\set{L}_1|$. By \cref{lemma:removing_edges}, there exists a structure, denoted by $\tilde{\mathcal{G}}_1$, such that: (1) $\tilde{\mathcal{G}}_1$ is identical to $\mathcal{G}$, except that $|\set{L}_1|(|\set{L}_1|-1)/2=|\mathcal{G}|+m-d_1$ edges among the variables $\set{L}_1$ and their children are removed in $\tilde{\mathcal{G}}_1$, and (2)~$\mathcal{M}(\tilde{\mathcal{G}}_1)=\mathcal{M}(\mathcal{G})$. Therefore, the base case is done.

Suppose that the statements hold for $j=t$, i.e., there exists a structure $\tilde{\mathcal{G}}_t$ such that: (1) $\tilde{\mathcal{G}}_t$ is identical to $\mathcal{G}$, except that $|\mathcal{G}|+m-d_t$ edges among the variables $\bigcup_{i=1}^t\set{L}_i$ and their children are removed in $\tilde{\mathcal{G}}_t$, and (2)~$\mathcal{M}(\tilde{\mathcal{G}}_t)=\mathcal{M}(\mathcal{G})$.

Now consider $j=t+1$. Note that only the edges among $\bigcup_{i=1}^t\set{L}_i$ and their children are removed in $\tilde{\mathcal{G}}_t$ (as compared to $\mathcal{G}$); by assumption, the variables $\bigcup_{i=1}^t\set{L}_i$ are not ancestors of the variables $\set{L}_{t+1}$. Therefore, the incoming and outgoing edges of variables $\set{L}_{t+1}$ in $\tilde{\mathcal{G}}_t$ are the same as those in $\mathcal{G}$. This implies that $\set{L}_{t+1}$ have the same set of parents and children in $\tilde{\mathcal{G}}_t$, because they have the same set of parents and children in $\mathcal{G}$. Furthermore, under \cref{assumption:llh_graphical_criterion}, the number of children is at least $|\set{L}_{t+1}|$. By \cref{lemma:removing_edges}, there exists a structure, denoted by $\tilde{\mathcal{G}}_{t+1}$, such that: (1) $\tilde{\mathcal{G}}_{t+1}$ is identical to $\tilde{\mathcal{G}}_t$, except that $|\set{L}_{t+1}|(|\set{L}_{t+1}|-1)/2$ edges among the variables $\set{L}_{t+1}$ and their children are removed in $\tilde{\mathcal{G}}_{t+1}$, and (2)~$\mathcal{M}(\tilde{\mathcal{G}}_{t+1})=\mathcal{M}(\tilde{\mathcal{G}}_t)$. 

By the induction hypothesis, we have $\mathcal{M}(\tilde{\mathcal{G}}_{t+1})=\mathcal{M}(\tilde{\mathcal{G}}_t)=\mathcal{M}(\mathcal{G})$. Also, it is clear that $\tilde{\mathcal{G}}_{t+1}$ is identical to $\mathcal{G}$, except that
\[
|\mathcal{G}|+m-d_t+\frac{1}{2}|\set{L}_{t+1}|(|\set{L}_{t+1}|-1)=|\mathcal{G}|+m-d_{t+1}
\]
edges among the variables $\bigcup_{i=1}^{t+1}\set{L}_i$ and their children are removed in $\tilde{\mathcal{G}}_{t+1}$. Therefore, the induction step is done.
\end{proof}

\begin{figure}[b]
  \vspace{-0mm}
  \centering
\centering
\subfloat[Graph $\mathcal{G}$.]{
\includegraphics[width=0.46\textwidth]{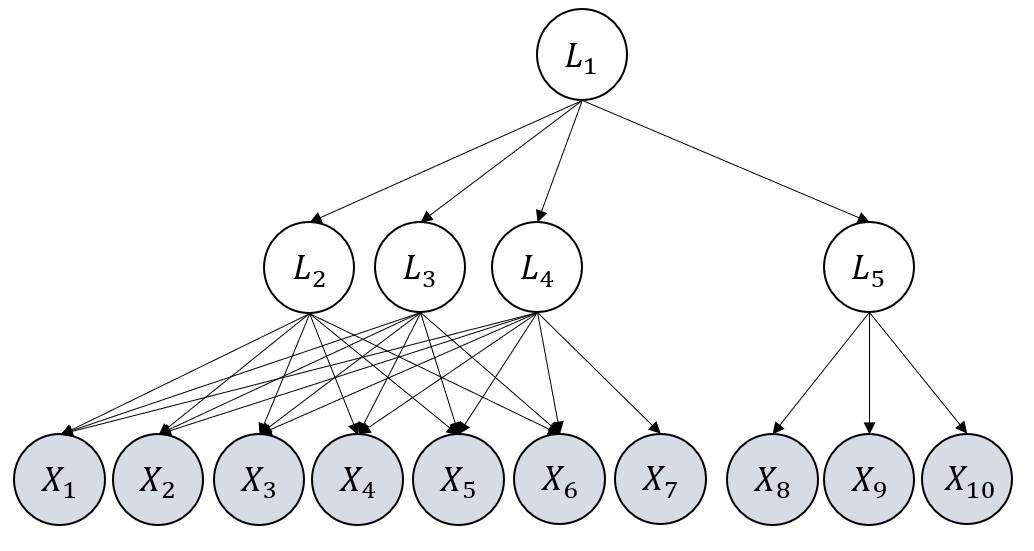}
}
\centering
\subfloat[$\mathcal{O}_{\text{skeleton}}(\mathcal{G})$.]{
    \includegraphics[width=0.46\textwidth]{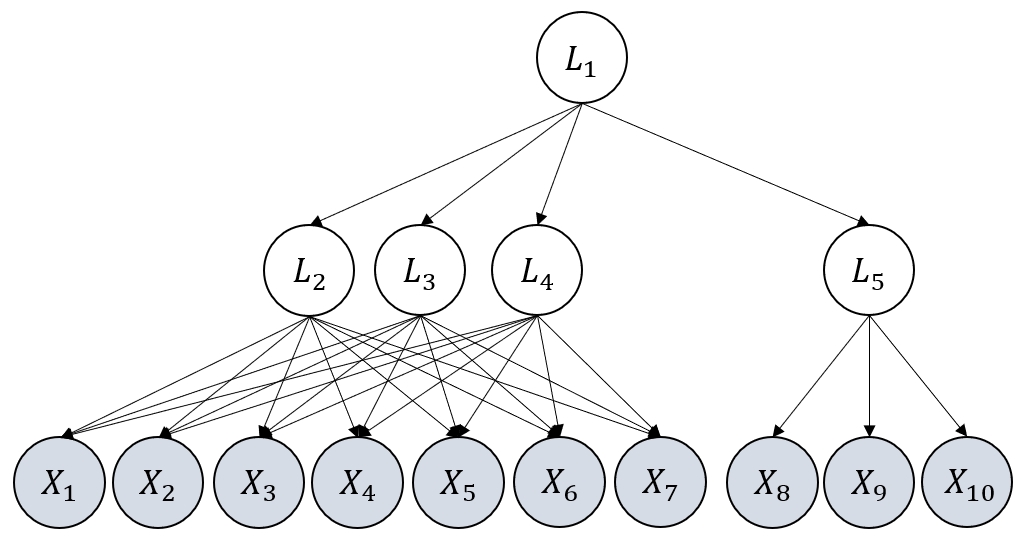}
}\hspace{1.3em}
\subfloat[$\mathcal{O}_{\text{min}}(\mathcal{O}_{\text{skeleton}}(\mathcal{G}))$.]{
    \includegraphics[width=0.46\textwidth]{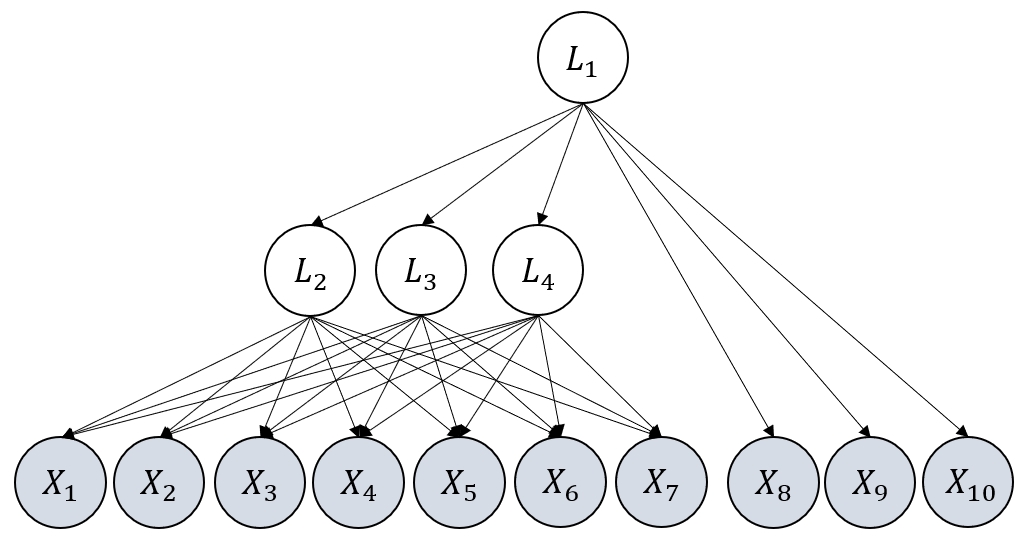}
}\hspace{1.3em}
\subfloat[$\mathcal{O}_{\text{atomic}}(\mathcal{O}_{\text{min}}(\mathcal{O}_{\text{skeleton}}(\mathcal{G})))$.]{
    \includegraphics[width=0.46\textwidth]{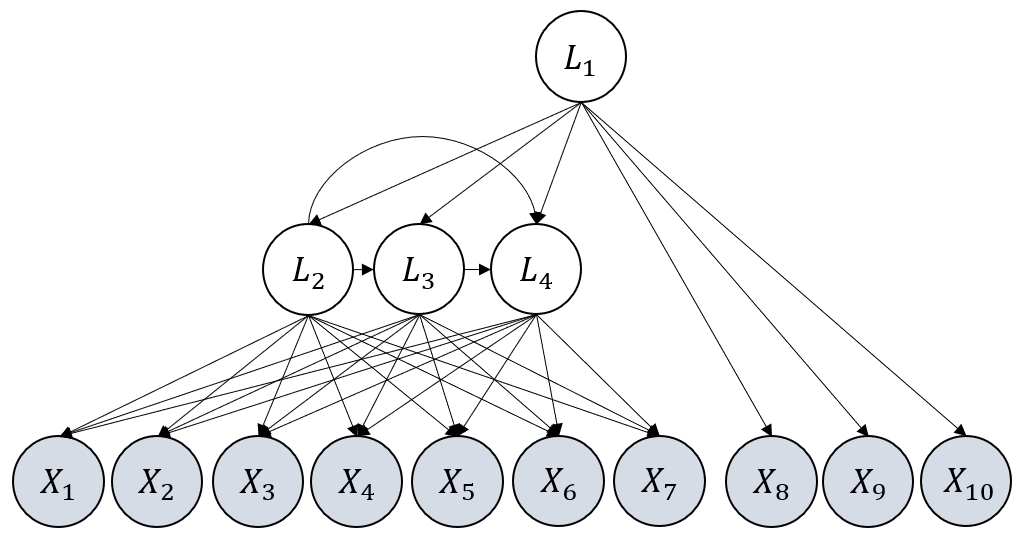}
}\hspace{1.3em}
  \caption{Example to illustrate graph operators $\mathcal{O}_{\text{atomic}}$, $\mathcal{O}_{\text{min}}$, and $\mathcal{O}_{\text{skeleton}}$.}
  \label{fig:example for operator}
\end{figure}

\subsection{Definition of Graph Operators and Proof of \cref{thm:correctness_llh_exact}}\label{app:thm_correctness_llh_exact}

We provide the definition of structure operations
$\mathcal{O}_{\text{atomic}}$, $\mathcal{O}_{\text{min}}$, and $\mathcal{O}_{\text{skeleton}}$ below, with an example in \cref{fig:example for operator}.

  \begin{definition}[Minimal-graph operator \citep{huang2022latent,dong2023versatile}]
    \label{def:minimal operator}
  For every two atomic covers $\set{L}$ and $\set{P}$ in structure $\graph$, we merge $\set{L}$ to $\set{P}$ if the following conditions hold: (i) $\set{L}$ is the pure children of $\set{P}$, (ii) all elements of $\set{L}$ and $\set{P}$ are latent and  $|\set{L}| = |\set{P}|$, and (iii) the pure children of $\set{L}$ form a single atomic cover, or the siblings of $\set{L}$ form a single atomic cover. We denote such an operator as minimal-graph operator $\mathcal{O}_{\text{min}}(\graph)$.
  \end{definition}
  
    \begin{definition}[Skeleton operator \citep{huang2022latent,dong2023versatile}]
    \label{def:skeleton operator}
    Given an atomic cover $\set{L}$ in structure $\graph$.  Consider $\setset{S}$ as the set of atomic covers such that for all $\set{S}\in \setset{S}$, we have $\set{S}\subseteq\set{L}$. Let $\set{C}= \purechildren(\set{L}) \backslash \cup_{\set{S}\in\setset{S}}\purechildren(\set{S})$. We add edges from elements in $\set{L}$ to elements in $\set{C}$, and 
    we denote such an operator as skeleton operator $\mathcal{O}_{\text{skeleton}}(\graph)$.
  \end{definition}

 \begin{definition}[Intra atomic operator]
    \label{def:intra atomic operator}
    For every atomic cover $\set{L}$ in structure $\graph$, if $|\set{L}|\geq 2$, then we add edges between elements in $\set{L}$ such that  $\set{L}$ form a fully connected DAG. We denote such an operator as intra atomic operator $\mathcal{O}_{\text{atomic}}(\graph)$.
  \end{definition}

\begin{example}[Example for graph operations]
Let the graph in \cref{fig:example for operator}(a) be $\graph$.
By the skeleton operator,
we add edges from $L_2$ and $L_3$ to $X_7$,
and we arrive at $\mathcal{O}_{\text{skeleton}}(\graph)$,
which is shown in \cref{fig:example for operator}(b).
By the minimal graph operator,
we delete $L_5$ and directly link $L_1$ to $X_8,X_9,X_{10}$,
and arrive at $\mathcal{O}_{\text{min}}(\mathcal{O}_{\text{skeleton}}(\graph))$,
which is shown in \cref{fig:example for operator}(c).
Finally,
by the intra atomic operator,
we add edges among $L_2,L_3,L_4$ such that they are fully connected,
and arrive at $\mathcal{O}_{\text{atomic}}(\mathcal{O}_{\text{min}}(\mathcal{O}_{\text{skeleton}}(\graph)))$,
which is shown in \cref{fig:example for operator}(d).
\end{example}

\begin{remark}[Necessity of graph operators]
These three graph operators do not change the rank constraints (among measured variables), and thus by using rank constraints for causal discovery as in \citet{huang2022latent,dong2023versatile}, we can at most identify the structure up to these graph operators.
\end{remark}

We now provide the proof of the following result.
\TheoremCorrectnessLLHExact*
\begin{proof}
Since the search space contains the DAG $\mathcal{O}_{\textrm{min}}(\mathcal{O}_{\textrm{skeleton}}(\mathcal{G}^*))$ that can generate $\Sigma_{X}$ in the large sample limit, the estimated DAG $\hat{\mathcal{G}}$ can also generate $\Sigma_{X}$, because otherwise its score will be infinity and will not be a solution of the optimization problem.  Because $\mathcal{G}^*$ and $\Sigma_X$ satisfy the generalized faithfulness assumption, we have $H(\hat{\mathcal{G}})=H(\mathcal{G}^*)$ in the large sample limit by restricting the set of structures to those satisfying \cref{assumption:llh_graphical_criterion} and $\mathcal{G}=\mathcal{O}_{\textrm{min}}(\mathcal{O}_{\textrm{skeleton}}(\mathcal{G}))$ in \cref{thm:equivalence_equality_constraints}. This indicates that $\hat{\mathcal{G}}$ and $\Sigma_X$ also satisfy the generalized faithfulness assumption.

Let $\mathcal{G}'$ be the structure estimated by Algorithm~1 in \citet{huang2022latent} based on the covariance matrix $\Sigma_X$. Since $\hat{\mathcal{G}}$ satisfies \cref{assumption:llh_graphical_criterion} and can faithfully generate $\Sigma_X$, \citet[Theorem~10]{huang2022latent} and \citet[Theorem~13]{dong2023versatile} imply that $\mathcal{O}_{\textrm{atomic}}(\mathcal{G}')$ and $\mathcal{O}_{\textrm{atomic}}(\mathcal{O}_{\textrm{min}}(\mathcal{O}_{\textrm{skeleton}}(\hat{\mathcal{G}})))$ are Markov equivalent. With similar reasoning, we can show that $\mathcal{O}_{\textrm{atomic}}(\mathcal{G}')$ and $\mathcal{O}_{\textrm{atomic}}(\mathcal{O}_{\textrm{min}}(\mathcal{O}_{\textrm{skeleton}}(\mathcal{G}^*)))$ are Markov equivalent. Therefore, by the transitivity of Markov equivalence, $\mathcal{O}_{\textrm{atomic}}(\mathcal{O}_{\textrm{min}}(\mathcal{O}_{\textrm{skeleton}}(\hat{\mathcal{G}})))$ and $\mathcal{O}_{\textrm{atomic}}(\mathcal{O}_{\textrm{min}}(\mathcal{O}_{\textrm{skeleton}}(\mathcal{G}^*)))$ are Markov equivalent.

Recall that $\hat{\mathcal{G}}=\mathcal{O}_{\textrm{min}}(\mathcal{O}_{\textrm{skeleton}}(\hat{\mathcal{G}}))$. This implies that $\mathcal{O}_{\textrm{atomic}}(\hat{\mathcal{G}})$ and $\mathcal{O}_{\textrm{atomic}}(\mathcal{O}_{\textrm{min}}(\mathcal{O}_{\textrm{skeleton}}(\mathcal{G}^*)))$ are Markov equivalent.
\end{proof}

\section{Supplementary Experiment Details}\label{app:experiment_details}
\textbf{Implementation details.} \ \
To improve the efficiency of \cref{alg:count_llh_dimension}, we iterate over the latent atomic covers to identify latent variables with the same set of parents and children. For our exact search method, we use L-BFGS \citep{byrd1995limited} implemented through \texttt{SciPy} \citep{virtanen2020scipy} and \texttt{PyTorch} \citep{paszke2019pytorch} packages (with the default hyperparameters) to solve the optimization problem in \cref{eq:score_likelihood} when computing BIC. The experiments for the exact search method are conducted on $16$ CPUs in parallel. For the continuous search method, i.e., SALAD-CS, we use augmented Lagrangian method \citep{bertsekas1982constrained,bertsekas1999nonlinear,nocedal2006numerical} to solve the continuous constrained optimization problem, in which each subproblem is solved using the Adam optimizer \citep{kingma2014adam} with $3000$ iterations. Furthermore, \cref{eq:continuous_optimization_silva} involves a nonconvex optimization problem; similar to continuous optimization methods for causal discovery \citep{ng2024structure}, this procedure may yield suboptimal local solutions. Thus, we run the SALAD-CS method from $10$ random initializations, and select the final solution with the best score.

For HUANG, GIN, and RCD, we use the publicly available implementations with default hyperparameters. For FOFC, we use the implementation through the \texttt{py-causal} package \citep{scheines1998tetrad} with Wishart test and significance  level of $0.001$. Note that we also experimented with significance level of $0.01$, $0.05$ and $0.1$, for which many of the runs are invalid (because an error occurred).

\textbf{Metrics.} \ \
Since the goal is to recover the structure up to Markov equivalence, we compute the SHDs over the MECs. Note that the labeling of the latent variables is not important; therefore, we calculate the SHDs of the estimated MECs over all possible permutations of the latent variables, and select the smallest SHD. Similarly, we also compute the F1 scores of the estimated skeletons over all permutations of latent variables, and select the highest F1 score.

For FOFC, an error occurred in some of the experimental runs. Therefore, we additionally report the number of valid runs (for which an error did not occur).

\section{Runtime and Computational Efficiency}\label{app:experiment_results}
In this section, we report the runtime for different methods considered. For the 1-factor models, our SALAD method has a runtime of $8.77\pm 0.73$ and $44.88\pm 8.02$ minutes for $10$ and $11$ measured variables, respectively, while for hierarchical structures, it takes $16.11\pm 2.01$ minutes. For the SALAD-CS method, each optimization run takes $14.17\pm 0.69$ and $14.73\pm 1.91$ minutes for $10$ and $11$ measured variables, respectively. For the baselines, GIN, HUANG, and FOFC generally finish within one minute. For the 1-factor models, RCD requires $17.23\pm 34.84$ and $13.06\pm 24.33$ minutes for $10$ and $11$ measured variables, respectively, while for hierarchical structures, it has a runtime of $7.76\pm 15.33$ minutes.

Our methods have a comparable runtime as RCD, but achieve better performance. Although the runtime of our methods exceeds that of GIN, HUANG, and FOFC, the improvement in the causal discovery performance is significant. As discussed in \cref{app:experiment_details}, our experiments are conducted on CPUs. It is worth noting that the runtime may be further decreased by (i)~conducting experiments with GPU acceleration (specifically when using gradient-based optimization to solve \cref{eq:score_likelihood,eq:continuous_optimization_silva}), or (ii) performing more score computations of different structures (specifically for exact search) concurrently on different CPUs.

Indeed, the relatively long runtime of our methods may be unsurprising because, even without latent variables, exact score-based methods \citep{singh2005finding,yuan2013learning} are known to require a long runtime. The search procedure developed in our work serves as a proof of concept, tailored for scenarios involving a relatively small number of variables. Nonetheless, the empirical performance validates the effectiveness of score-based methods for estimating latent variable causal models. Future works include developing greedy approaches similar to GES to make the search procedure more efficient and scalable.

\input{tables/shd_mec_table}

\clearpage
\begin{figure}[!t]
\centering
\subfloat[Example 1.]{
    \includegraphics[width=0.46\textwidth]{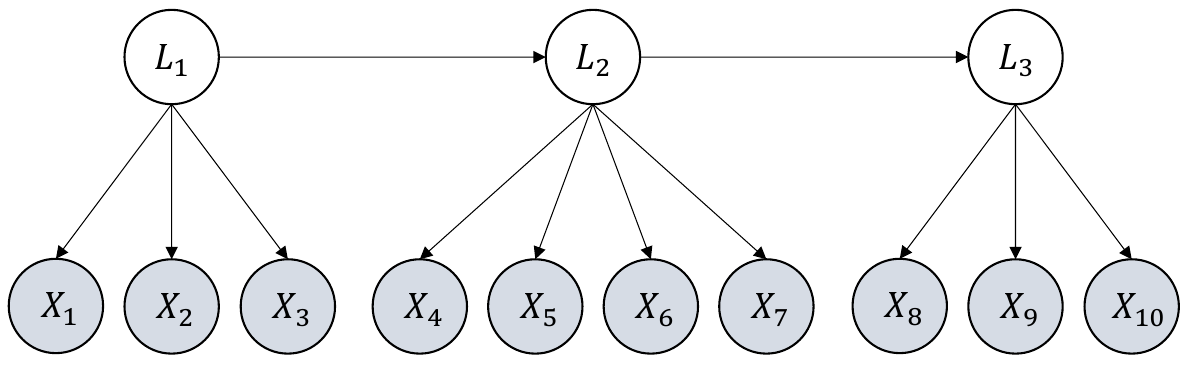}
}\hspace{1.3em}
\subfloat[Example 2.]{
    \includegraphics[width=0.46\textwidth]{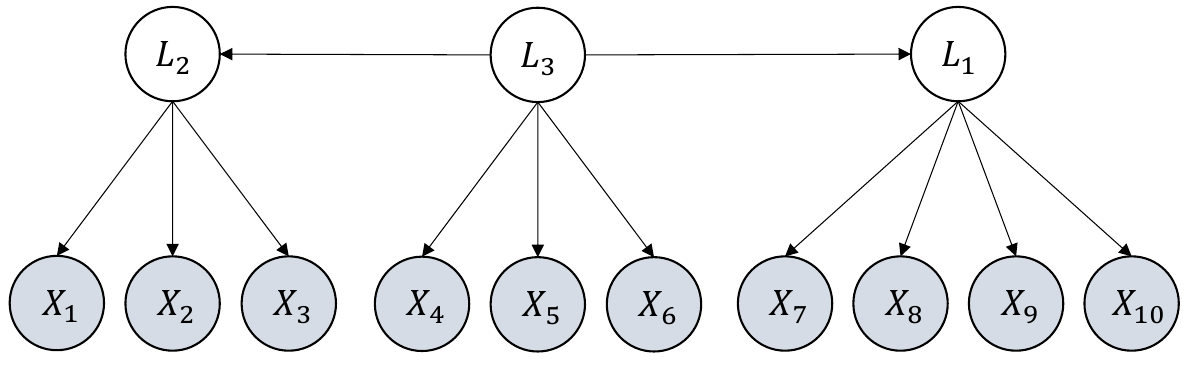}
}\\
\subfloat[Example 3.]{
    \includegraphics[width=0.49\textwidth]{figures/silva_example_3.pdf}
}\hspace{1.3em}
\subfloat[Example 4.]{
    \includegraphics[width=0.42\textwidth]{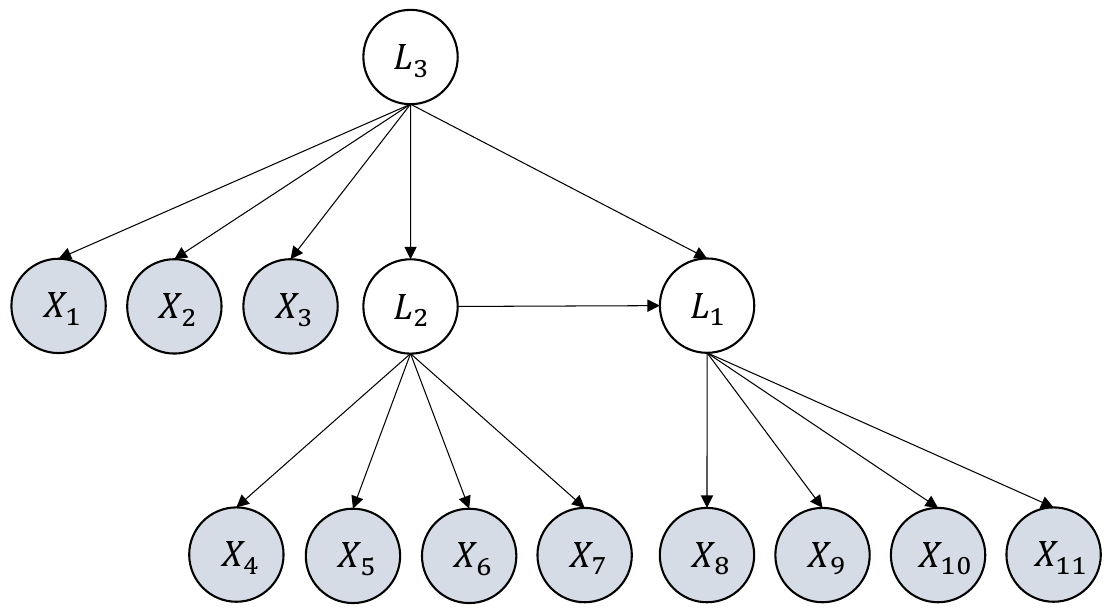}
}
\caption{Ground truths for 1-factor latent variable models.}
\label{fig:silva_ground_truths}
\end{figure}

\begin{figure}[!t]
\centering
\subfloat[Example 1.]{
    \includegraphics[width=0.4\textwidth]{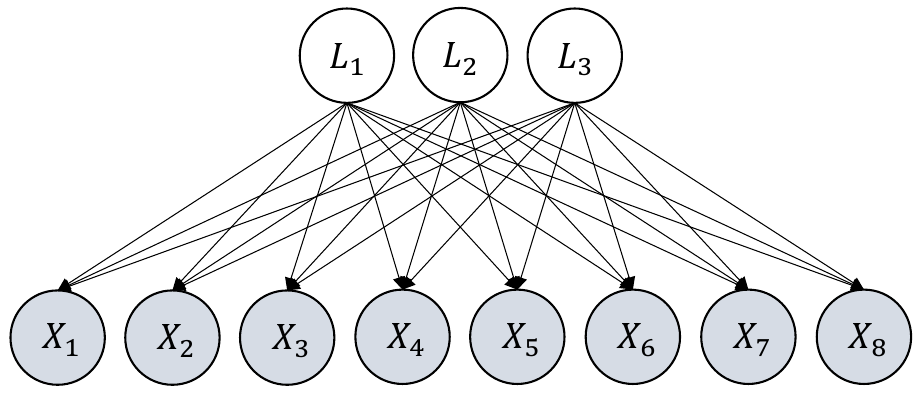}
}\hspace{1.3em}
\subfloat[Example 2.]{
    \includegraphics[width=0.3\textwidth]{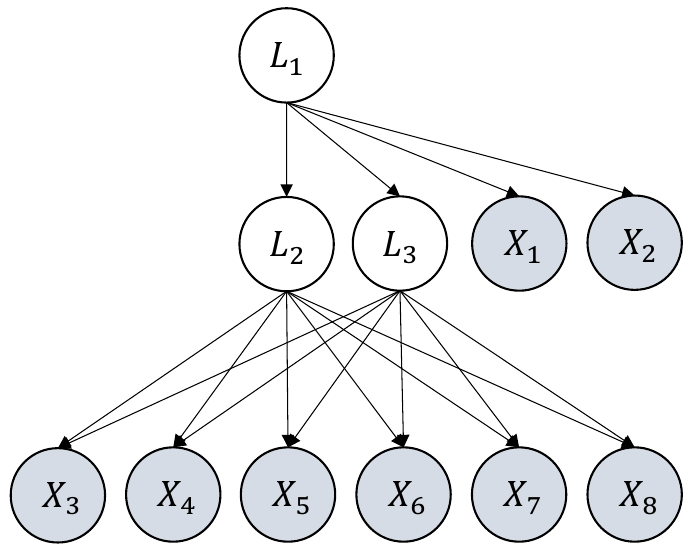}
}\\
\subfloat[Example 3.]{
    \includegraphics[width=0.4\textwidth]{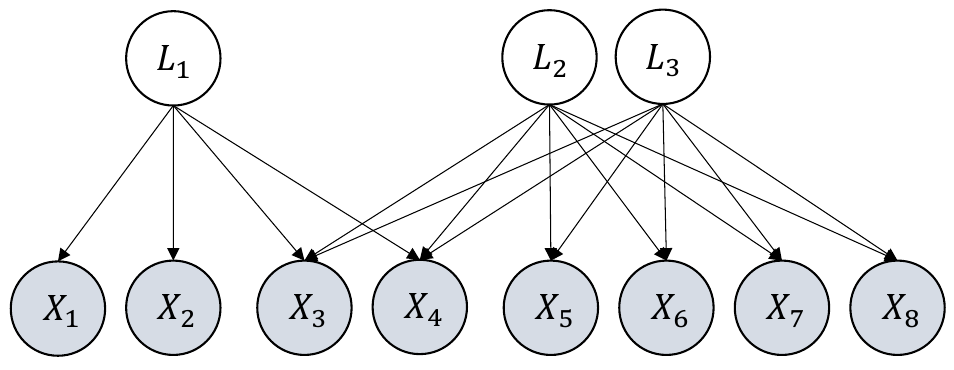}
}\hspace{1.3em}
\subfloat[Example 4.]{
    \includegraphics[width=0.36\textwidth]{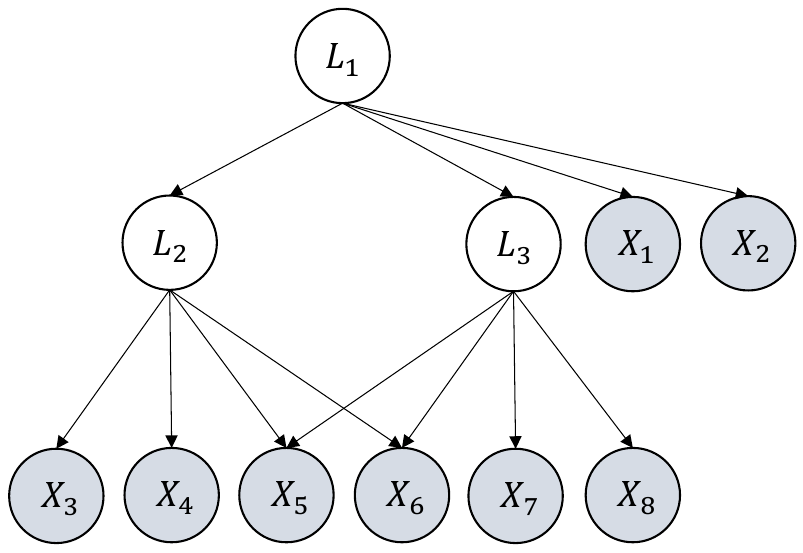}
}
\caption{Ground truths for latent hierarchical structures.}
\label{fig:llh_ground_truths}
\end{figure}

%% file: tables/shd_mec_table.tex
\begin{table*}[!h]
\centering
\caption{SHDs of MECs across various structural assumptions and sample sizes. For each setting, the top two methods are in bold. For FOFC, the number within the brackets indicates the number of valid runs (for which an error did not occur).}
\label{tab:shd_pdags}
\begin{tabular}{cccccccc}
\toprule
Model type & Sample size  & SALAD & SALAD-CS & HUANG & FOFC & GIN \\\midrule
\multirow{5}{*}{\shortstack{1-Factor\\ \\ model}}
 & $100$   & {\bf 0.33\,$\pm$\,0.65} &  {\bf 0.75\,$\pm$\,1.76} & 11.50\,$\pm$\,4.58 & 3.13\,$\pm$\,2.64 (8)  &  15.25\,$\pm$\,1.36 \\
 & $300$   & {\bf 0.08\,$\pm$\,0.29} &  {\bf 0.17\,$\pm$\,0.39} & 3.50\,$\pm$\,3.45  & 1.67\,$\pm$\,1.11 (9)  &  15.25\,$\pm$\,1.36 \\
 & $1000$  & {\bf 0.33\,$\pm$\,0.89} &  {\bf 0.08\,$\pm$\,0.29} & 1.67\,$\pm$\,0.78  & 1.67\,$\pm$\,1.07 (12) &  15.25\,$\pm$\,1.36 \\
 & $3000$  &    {\bf 0\,$\pm$\,0}    &  {\bf 0.17\,$\pm$\,0.39} & 1.67\,$\pm$\,0.78  & 1.25\,$\pm$\,1.14 (12) &  15.17\,$\pm$\,1.34 \\
 & $10000$ &    {\bf 0\,$\pm$\,0}    &  {\bf 0.17\,$\pm$\,0.39} & 1.67\,$\pm$\,0.78  & 1.18\,$\pm$\,1.17 (11) &  14.75\,$\pm$\,2.01 \\ \midrule
\multirow{5}{*}{\shortstack{Hierarchical\\ \\ structure}}
 & $100$   &  {\bf 4.50\,$\pm$\,3.32} & N/A & {\bf 14.25\,$\pm$\,2.73} & N/A (0) & 18.50\,$\pm$\,3.73 & \\
 & $300$   &  {\bf 3.67\,$\pm$\,3.06} & N/A & {\bf 11.42\,$\pm$\,3.20} & N/A (0) & 18.50\,$\pm$\,3.73 & \\
 & $1000$  &  {\bf 2.75\,$\pm$\,2.18} & N/A &  {\bf 8.00\,$\pm$\,3.64} & N/A (0) & 18.50\,$\pm$\,3.73 & \\
 & $3000$  &  {\bf 1.92\,$\pm$\,1.98} & N/A &  {\bf 4.92\,$\pm$\,4.08} & N/A (0) & 18.50\,$\pm$\,3.73 & \\
 & $10000$ &  {\bf 1.42\,$\pm$\,1.56} & N/A &  {\bf 4.17\,$\pm$\,4.91} & N/A (0) & 18.42\,$\pm$\,3.75 & \\
 \bottomrule
\end{tabular}
\end{table*}